\def\eqref#1{equation~\ref{#1}}
\def\1{\bm{1}}
\def\rx{{\textnormal{x}}}
\def\ry{{\textnormal{y}}}
\def\rz{{\textnormal{z}}}
\def\vgamma{{\bm{\gamma}}}
\def\vp{{\bm{p}}}
\def\vw{{\bm{w}}}
\def\vla{{\bm{\lambda}}}
\def\mH{{\bm{H}}}
\DeclareMathAlphabet{\mathsfit}{\encodingdefault}{\sfdefault}{m}{sl}
\SetMathAlphabet{\mathsfit}{bold}{\encodingdefault}{\sfdefault}{bx}{n}
\def\sX{{\mathbb{X}}}
\def\sY{{\mathbb{Y}}}
\def\sZ{{\mathbb{Z}}}
\DeclareMathOperator*{\argmax}{arg\,max}
\DeclareMathOperator*{\argmin}{arg\,min}
\DeclareMathOperator{\sign}{sign}
\newtheorem{remark}{Remark}
\newtheorem{proposition}{Proposition}
\newcommand{\la}{\lambda}
\newcommand{\fb}{FairBatch}
\setlist[itemize]{leftmargin=0.5cm, topsep=0pt}
\title{FairBatch:\\ Batch Selection for Model Fairness}
\author{Yuji Roh$^{1}$, Kangwook Lee$^{2}$, Steven Euijong Whang\thanks{Corresponding author}$\,\,^{1}$, Changho Suh$^{1}$  \\
$^{1}$KAIST, \texttt{\{yuji.roh,swhang,chsuh\}@kaist.ac.kr} \\
$^{2}$University of Wisconsin-Madison, \texttt{kangwook.lee@wisc.edu}\\
}
\definecolor{darkgray}{rgb}{0.66, 0.66, 0.66}
\definecolor{cordovan}{rgb}{0.54, 0.25, 0.27}
\definecolor{cobalt}{rgb}{0.0, 0.28, 0.67}
\definecolor{ceruleanblue}{rgb}{0.16, 0.32, 0.75}
\definecolor{blue(ryb)}{rgb}{0.01, 0.28, 0.9}
\begin{document}

\maketitle

\begin{abstract}
Training a fair machine learning model is essential to prevent demographic disparity. Existing techniques for improving model fairness require broad changes in either data preprocessing or model training, rendering themselves difficult-to-adopt for potentially already complex machine learning systems. We address this problem via the lens of \emph{bilevel optimization}. While keeping the standard training algorithm as an inner optimizer, we incorporate an outer optimizer so as to equip the inner problem with an additional functionality: \emph{Adaptively selecting minibatch sizes for the purpose of improving model fairness}. Our batch selection algorithm, which we call \fb{}, implements this optimization and supports prominent fairness measures: equal opportunity, equalized odds, and demographic parity. \fb{} comes with a significant implementation benefit -- it does not require any modification to data preprocessing or model training. For instance, a single-line change of PyTorch code for replacing batch selection part of model training suffices to employ \fb{}. Our experiments conducted both on synthetic and benchmark real data demonstrate that \fb{} can provide such functionalities while achieving comparable (or even greater) performances against the state of the arts.  Furthermore, \fb{} can readily improve fairness of any pre-trained model simply via fine-tuning. It is also compatible with existing batch selection techniques intended for different purposes, such as faster convergence, thus gracefully achieving multiple purposes.

\end{abstract}

\section{Introduction}

Model fairness is becoming essential in a wide variety of machine learning applications.
Fairness issues often arise in sensitive applications like healthcare and finance where a trained model must not discriminate among different individuals based on age, gender, or race.

While many fairness techniques have recently been proposed, they require a range of changes in either data generation or algorithmic design. 
There are two popular fairness approaches: (i) pre-processing where training data is debiased~\citep{choi2020fair} or re-weighted~\citep{pmlr-v108-jiang20a}, and (ii) in-processing in which an interested model is retrained via several fairness approaches such as fairness objectives~\citep{DBLP:conf/aistats/ZafarVGG17, DBLP:conf/www/ZafarVGG17}, adversarial training~\citep{DBLP:conf/aies/ZhangLM18}, or boosting~\citep{10.1145/3357384.3357974}; see more related works discussed in depth in Sec.~\ref{sec:relatedwork}.
However, these approaches may require nontrivial re-configurations in modern machine learning systems, which often consist of many complex components.

In an effort to enable easier-to-reconfigure implementation for fair machine learning, we address the problem via the lens of bilevel optimization where one problem is embedded within another.
While keeping the standard training algorithm as the inner optimizer, we design an outer optimizer that equips the inner problem with an added functionality of improving fairness through batch selection.

Our main contribution is to develop a batch selection algorithm (called \fb{}) that implements this optimization via adjusting the batch sizes w.r.t.\@ sensitive groups based on the fairness measure of an intermediate model, measured in the current epoch.
For example, consider a task of predicting whether individual criminals re-offend in the future subject to satisfying equalized odds~\citep{DBLP:conf/nips/HardtPNS16} where the model accuracies must be the same across sensitive groups.
In case the model is less accurate for a certain group, \fb{} increases the batch-size ratio of that group in the next batch -- see Sec.~\ref{sec:fairbatch} for our adjusting mechanism described in detail.
Fig.~\ref{fig:gridsearch} shows \fb{}'s behavior when running on the ProPublica COMPAS dataset~\citep{Compas}.
For equalized odds, our framework (to be described in Sec.~\ref{sec:fairtraining}) introduces two reweighting parameters ($\lambda_1$, $\lambda_2$) for the purpose of adjusting the batch-size ratios of two sensitive groups (in this experiment, men and women).
After a few epochs, \fb{} indeed achieves \emph{e}qualized o\emph{d}ds, i.e., the accuracy \emph{disparity} between sensitive groups conditioned on the true label (denoted as ``ED disparity'') is minimized.
\fb{} also supports other prominent group fairness measures: equal opportunity~\citep{DBLP:conf/nips/HardtPNS16} and demographic parity~\citep{DBLP:conf/kdd/FeldmanFMSV15}.

A key feature of FairBatch is in its great usability and simplicity. It only requires a slight modification in the batch selection part of model training as demonstrated in Fig.~\ref{fig:pytorch_code} and does not require any other changes in data preprocessing or model training.
Experiments conducted both on synthetic and benchmark real datasets (ProPublica COMPAS~\citep{Compas}, AdultCensus~\citep{DBLP:conf/kdd/Kohavi96}, and UTKFace~\citep{zhifei2017cvpr}) show that \fb{} exhibits greater (at least comparable) performances relative to the state of the arts (both spanning pre-processing~\citep{DBLP:journals/kais/KamiranC11,pmlr-v108-jiang20a} and in-processing~\citep{DBLP:conf/aistats/ZafarVGG17, DBLP:conf/www/ZafarVGG17,DBLP:conf/aies/ZhangLM18,10.1145/3357384.3357974} techniques) w.r.t.\@ all aspects in consideration: accuracy, fairness, and runtime.
In addition, \fb{} can improve fairness of any pre-trained model via fine-tuning.
For example, Sec.~\ref{sec:finetuning} shows how \fb{} reduces the ED disparities of ResNet18~\citep{He_2016} and GoogLeNet~\citep{Szegedy_2015} pre-trained models.
Finally, \fb{} can be gracefully merged with other batch selection techniques typically used for faster convergence, thereby improving fairness faster as well.

\begin{figure}[t]
\vspace{-0.8cm}
\centering
\begin{subfigure}{0.46\columnwidth}
\centering
\includegraphics[width=0.9\columnwidth,trim=0cm 0.25cm 0cm 0cm]{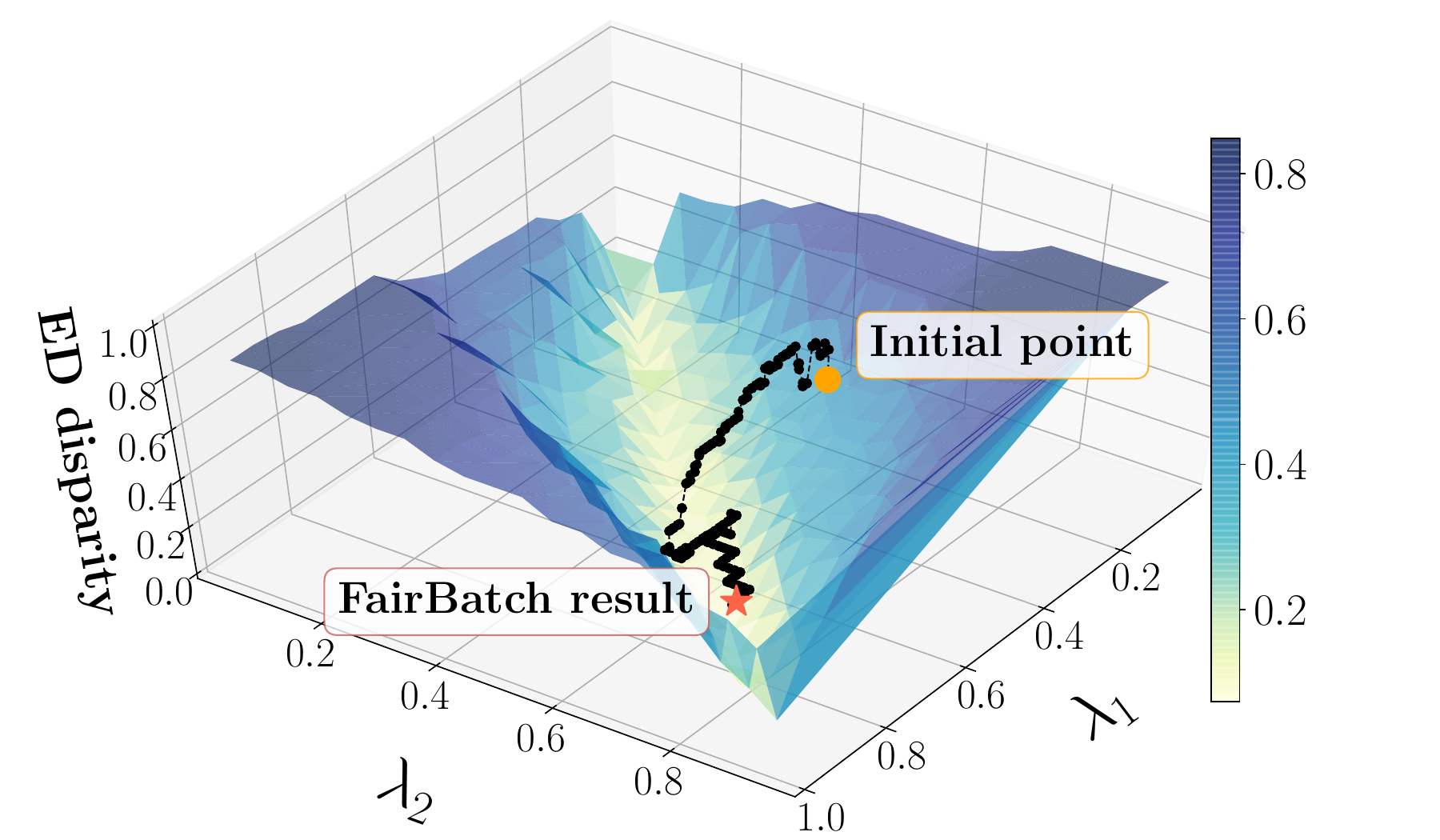}
\caption{Accuracy difference across sensitive groups in the sense of equalized odds (that we denote as ``ED disparity'') when running \fb{} on the ProPublica COMPAS dataset.}
\label{fig:gridsearch}
\end{subfigure}
\hspace{0.1cm}
\begin{subfigure}{0.47\columnwidth}
\centering
\includegraphics[width=\columnwidth]{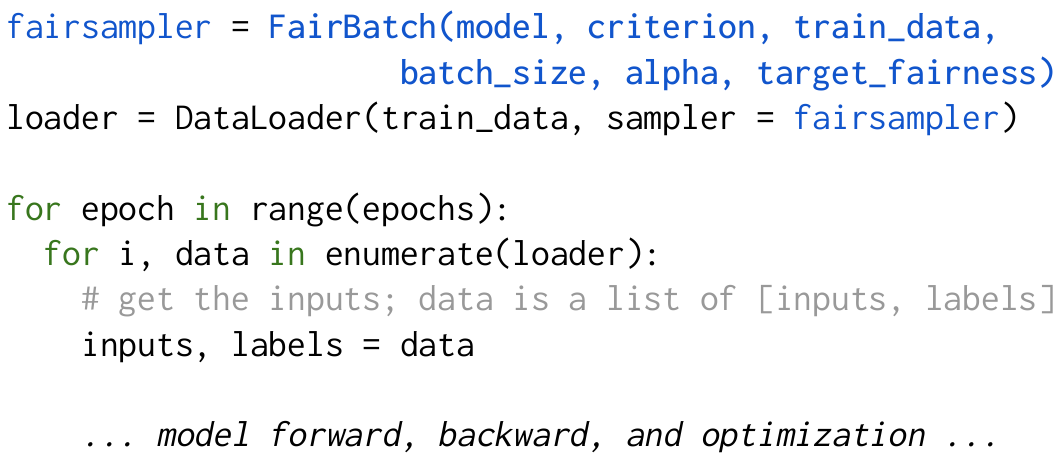}
\vspace{0.1cm}
\caption{PyTorch code for model training where the batch selection is replaced with \fb{}.}
\vspace{0.2cm}
\label{fig:pytorch_code}
\end{subfigure}
\caption{The black path in the left figure shows how \fb{} adjusts the batch-size ratios of sensitive groups using two reweighting parameters $\lambda_1$ and $\lambda_2$ (hyperparameters employed in our framework to be described in Sec.~\ref{sec:fairtraining}), thus minimizing their ED disparity, i.e., achieving equalized odds. 
The code in the right figure shows how easily \fb{} can be incorporated in a PyTorch machine learning pipeline. It requires a single-line change to replace the existing sampler with \fb{}, marked in {\color{blue(ryb)} blue}.}
\vspace{-0.4cm}
\end{figure}

\vspace{-0.1cm}

\paragraph{Notation}
Let $\vw$ be the parameter of an interested classifier.
Let $\rx \in \sX$ be an input feature to the classifier, and let $\hat{\ry} \in \sY$ be the predicted class. 
Note that $\hat{\ry}$ is a function of ($\rx$, $\vw$).
We consider group fairness that intends to ensure fairness across distinct sensitive groups (e.g., men versus women).
Let $\rz \in \sZ$ be a sensitive attribute (e.g., gender).
Consider the $0$/$1$ loss: $\ell(\ry,\hat{\ry}) = \mathds{1}(\ry \neq \hat{\ry})$, and let $m$ be the total number of train samples.
Let $L_{y,z}(\vw)$ be the empirical risk aggregated over samples subject to $\ry = y$ and $\rz = z$:   
$L_{y,z}(\vw) := \frac{1}{m_{y,z}}\sum_{i:\ry_i = y, \rz_i = z} \ell(\ry_i,\hat{\ry}_i)$ where $m_{y,z} := |\{i: \ry_i = y, \rz_i = z\}|$.
Similarly, we define $L_{y,\star}(\vw) := \frac{1}{m_{y,\star}}\sum_{i:\ry_i = y} \ell(\ry_i,\hat{\ry}_i)$ and $L_{\star,z}(\vw) := \frac{1}{m_{\star,z}}\sum_{i:\rz_i = z} \ell(\ry_i,\hat{\ry}_i)$ where $m_{y,\star} := |\{i: \ry_i = y\}|$ and $m_{\star,z} := |\{i: \rz_i = z\}|$.
The overall empirical risk is written as $L(\vw) = \frac{1}{m}\sum_{i} \ell(\ry_i,\hat{\ry}_i)$.
We utilize $\nabla$ for gradient and $\partial$ for subdifferential.

\section{Bilevel Optimization for Fairness}
\label{sec:fairtraining}
In order to systematically design an adaptive batch selection algorithm, we formalize an implicit connection between adaptive batch selection and bilevel optimization.
Bilevel optimization consists of an outer optimization problem and an inner optimization problem.
The inner optimizer solves an inner optimization problem, and the outer optimizer solves an outer optimization problem based on the outcomes of inner optimization.
By viewing the standard training algorithm such as stochastic gradient descent (SGD)~\citep{bottou2010large} as an inner optimizer and viewing the batch selection algorithm as an outer optimizer, the process of training a fair classifier can be seen as a process of solving a bilevel optimization problem.

\paragraph{Batch selection + minibatch SGD = bilevel optimization solver}
Consider a scenario where one is minimizing the overall empirical risk $L(\vw)$ via minibatch SGD.
The minibatch SGD algorithm picks $b$ of the $m$ indices uniformly at random, say $j_1, j_2, \ldots, j_b$, and updates its iterate with $\frac{1}{b}\sum_{i=1}^{b} \nabla \ell(\ry_{j_i},\hat{\ry}_{j_i})$, called a batch gradient. 
Note that a batch gradient is an unbiased estimate of the true gradient $\nabla L(\vw)$.

Since the empirical risk minimization (ERM) formulation does not take a fairness criterion into account, its minimizer usually does not satisfy the desired fairness criterion. 
To address this limitation of ERM, we adjust the way minibatches are drawn so that the desired fairness guarantee is satisfied. 
For instance, as we described in the introduction, we can draw minibatches with a larger number of train samples from a certain sensitive group so as to achieve a higher accuracy w.r.t. the group. 

Once the minibatch distribution deviates from the uniform distribution, the batch gradient estimate is not anymore an unbiased gradient estimate of the overall empirical risk. 
Instead, it is an unbiased estimate of a \emph{reweighted empirical risk}. 
In other words, if we draw train example $i$ with probability $p_i$ for all $i$ such that $\sum p_i = 1$, the batch gradient is an unbiased estimate of $L'(\vw) = \sum_{i} p_i \ell(\ry_i,\hat{\ry}_i)$.

\begin{algorithm}[t]
\DontPrintSemicolon
\setstretch{1.0}
    \SetKwInput{Input}{Input}
    \SetKwInOut{Output}{Output}
    \SetNoFillComment
    $\textrm{Minibatch sampling distribution} \gets \textrm{Uniform sampling}$\\
    \For{each epoch}{
    Draw minibatches according to minibatch sampling distribution\\
    \For{each minibatch}{
    $\vw \gets \texttt{MinibatchSGD}(\vw, \textrm{each minibatch})$
    }
    Update minibatch sampling distribution 
    }
    \caption{Bilevel optimization with \texttt{MinibatchSGD}}
    \label{alg:fair_batch}
\end{algorithm}

This observation enables us the following bilevel optimization-based interpretation of how batch selection interacts with inner optimization algorithm.
At initialization, minibatch SGD optimizes the (unweighted) empirical risk.
Based on the outcome of the inner optimization, the outer optimizer refines $\vp := (p_1, p_2, \ldots, p_m)$, the sampling probability of each train example.
The inner optimizer now takes minibatches drawn from a new distribution and reoptimizes the inner objective function.
Due to the new minibatch distribution, the inner objective now becomes a reweighted empirical risk w.r.t. $\vp$.  
This procedure is repeated until convergence.  
See Algorithm~\ref{alg:fair_batch} for pseudocode.

Therefore, a batch selection algorithm together with an inner optimization algorithm can be viewed as a pair of outer optimizer and inner optimizer for the following bilevel optimization problem:
\begin{align*}
&\min_{\vp} {\textrm{Cost}(\vw_\vp)},~\vw_\vp = \argmin_{\vw} L'(\vw),
\end{align*}
where $\textrm{Cost}(\cdot)$ captures the goal of the optimization.

Two questions arise.
First, how can we design the cost function to capture a desired fairness criterion?
Second, how can we design an update rule for the outer optimizer?  Can we develop an algorithm with a provable guarantee?
In the rest of this section, we show how one can design proper cost functions to capture various fairness criteria.
In Sec.~\ref{sec:fairbatch}, we will develop an efficient update rule of \fb{}.

\paragraph{Equal opportunity}
For illustrative purpose, assume for now the binary setting ($\sY = \sZ = \{0,1\}$).
A model satisfies equal opportunity~\citep{DBLP:conf/nips/HardtPNS16} if we have equal positive prediction rates conditioned on $\ry=1$, i.e., $L_{1,0}(\vw) = L_{1,1}(\vw)$.
Since the ERM formulation does not take the fairness criterion into account, these two quantities differ in general.
To mitigate this, we adjust the sampling probability between $L_{1,0}(\vw)$ and $L_{1,1}(\vw)$. 
More specifically, we propose the following procedure to draw a sample.
First, we randomly pick which subset of data to sample data from.
We pick the set $\ry = 1, \rz = 0$ with probability $\la$, the set $\ry = 1, \rz = 1$ with probability $\textstyle\frac{m_{1,\star}}{m}-\la$, and the set $\ry = 0$ with probability $\frac{m_{0,\star}}{m}$.
We then pick a sample from the chosen set, uniformly at random.
This leaves us with a single-dimensional outer optimization variable $\la$, which controls the sampling bias between data with $\ry = 1, \rz = 0$ and data with $\ry = 1, \rz = 1$.
Also, we design the cost function as $\left|L_{1,0}(\vw_\la) - L_{1,1}(\vw_\la)\right|$ to capture the equal opportunity criterion.
Thus, we have the following bilevel optimization problem:
\begin{align*}
&\min_{\la \in [0,\frac{m_{1,\star}}{m}]} {\left|L_{1,0}(\vw_\la) - L_{1,1}(\vw_\la)\right|},~\vw_\la = \argmin_{\vw} \la L_{1,0}(\vw) + (\textstyle\frac{m_{1,\star}}{m}-\la) L_{1,1}(\vw) + \frac{m_{0,\star}}{m}L_{0,\star}(\vw).
\end{align*}

\paragraph{Equalized odds}
Similarly, we can design a bilevel optimization problem to capture equalized odds~\citep{DBLP:conf/nips/HardtPNS16}, which desires the prediction to be independent from the sensitive attribute conditional on the true label, i.e., $L_{0,0}(\vw) = L_{0,1}(\vw)$ and $L_{1,0}(\vw) = L_{1,1}(\vw)$.
Again, the empirical risk minimizer does not satisfy these two conditions in general.
To mitigate these disparities, we adjust (i) the sampling probability between $L_{0,0}(\vw)$ and $L_{0,1}(\vw)$ and (ii) the sampling probability between $L_{1,0}(\vw)$ and $L_{1,1}(\vw)$.
To achieve this, we use the following procedure to draw a sample.
First, we pick the set $\ry = 0, \rz = 0$ with probability $\la_{1}$, the set $\ry = 0, \rz = 1$ with probability $\textstyle\frac{m_{0,\star}}{m}-\la_{1}$, the set $\ry = 1, \rz = 0$ with probability $\la_{2}$, and the set $\ry = 1, \rz = 1$ with probability $\textstyle\frac{m_{1,\star}}{m}-\la_{2}$.
We then pick one data point at random from the chosen set. 
This leaves us with a two-dimensional outer optimization variable $\vla = (\la_1, \la_2)$.
To capture the equalized odds criterion, we design the outer objective function as: $\max \{|L_{0,0}(\vw)-L_{0,1}(\vw)|,|L_{1,0}(\vw)-L_{1,1}(\vw)|\}$.
This gives us the following bilevel optimization problem: 
\begin{align*}
&\min_{\vla \in [0,\frac{m_{0,\star}}{m}]\times[0,\frac{m_{1,\star}}{m}]} \max \{|L_{0,0}(\vw_\vla)-L_{0,1}(\vw_\vla)|,|L_{1,0}(\vw_\vla)-L_{1,1}(\vw_\vla)|\},\\
&\vw_\vla = \argmin_{\vw} 
  \la_1 L_{0,0}(\vw) 
+ (\textstyle\frac{m_{0,\star}}{m}-\la_1) L_{0,1}(\vw) 
+ \la_2 L_{1,0}(\vw) 
+ (\textstyle\frac{m_{1,\star}}{m}-\la_2) L_{1,1}(\vw).
\end{align*}

\paragraph{Demographic parity}
Demographic parity~\citep{DBLP:conf/kdd/FeldmanFMSV15} is satisfied if two sensitive groups have equal positive prediction rates.
If $m_{y,z}$'s are all equal, then $L_{0,0}(\vw) = L_{1,0}(\vw)$ and $L_{0,1}(\vw) = L_{1,1}(\vw)$ can serve as a sufficient condition for demographic parity;
see Sec.~\ref{sec:a_2_dp_detail} for why and how to handle demographic parity when this condition does not hold.
To satisfy this sufficient condition, we now adjust (i) the the sampling probability  between $L_{0,0}(\vw)$ and $L_{1,0}(\vw)$ and (ii) the the sampling probability  between $L_{0,1}(\vw)$ and $L_{1,1}(\vw)$.
This gives us the following bilevel optimization problem: 
\begin{align*}
&\min_{\vla \in [0,\frac{m_{\star,0}}{m}]\times[0,\frac{m_{\star,1}}{m}]} \max \{|L_{0,0}(\vw_\vla)-L_{1,0}(\vw_\vla)|,|L_{0,1}(\vw_\vla)-L_{1,1}(\vw_\vla)|\},\\
&\vw_\vla = \argmin_{\vw} 
  \la_1 L_{0,0}(\vw) 
+ (\textstyle\frac{m_{\star,0}}{m}-\la_1) L_{1,0}(\vw) 
+ \la_2 L_{0,1}(\vw) 
+ (\textstyle\frac{m_{\star,1}}{m}-\la_2) L_{1,1}(\vw).
\end{align*}

\paragraph{Beyond binary labels/sensitive attributes}
While the previous examples assumed binary-valued labels and sensitive attributes, our framework is applicable to the cases where the alphabet sizes are beyond binary. 
As an example, consider the equal opportunity criterion when $\sZ = \{0,1,\ldots,n_z-1\}$.
The condition reads $L_{1,0}(\vw) = L_{1,1}(\vw) = \cdots = L_{1,n_z - 1}(\vw)$.
To satisfy this condition, we adjust the sampling probability between $L_{1,j}(\vw)$'s by introducing $n_z\choose2$-dimensional outer optimization variable $\vla$, and design the outer objective function as $\max_{j_1, j_2 \in \sZ} |L_{1,j_1}(\vw)- L_{1,j_2}(\vw)|$.
In our implementation, however, we only use $(n_z-1)$-dimensional disparity objectives as an approximation (i.e., $\max_{j_1 \in \{0, 1, \ldots, n_z-2\}} |L_{1,j_1}(\vw)- L_{1,j_1+1}(\vw)|$) for better efficiency. 
Suppose the level of disparity is $\epsilon$ when FairBatch compares all possible combination pairs of sensitive groups. 
Now suppose we only optimize on the sequential $(n_z-1)$ disparity objectives. Then we will fail to ensure that other objectives like $|L_{1,3}(\vw) - L_{1,1}(\vw)|$ are within $\epsilon$. In the worst case, the objective $|L_{1,n_z-1}(\vw) - L_{1,1}(\vw)|$ may be $(n_z-1) \times \epsilon$, as we only guarantee that each $|L_{1,j_1}(\vw)- L_{1,j_1+1}(\vw)| \leq \epsilon$. If $\epsilon$ is small enough, the disparity of our approximation becomes reasonable as well.
One can also handle other fairness criteria in a similar way. 

\section{Update Rule of \fb{}}
\label{sec:fairbatch}

We design efficient update rules of \fb{} for different numbers of disparities. 
Let us define $d$ as the dimension of the outer optimization variable $\vla$, which is the same as the total number of disparities.
We first analyze the simplest case where $d=1$.
We show that a simple gradient descent algorithm can provably solve the outer optimization problem.
The equal opportunity example in the previous section falls in this category.
We then extend the algorithm developed for the one-dimensional case to the multi-dimensional ($d>1$) case.
Equalized odds and demographic parity fall in this category.

\subsection{Update Rule for $d=1$}\label{sec:3_2}
When $d=1$, the general form of our bilevel optimization problem can be written as follows: 
\begin{align*}
&\min_{\la \in [0,c_1]} {\left|f_1(\vw_\la) - g_1(\vw_\la)\right|},~\vw_\la = \argmin_{\vw} \la f_1(\vw) + (\textstyle c_1-\la) g_1(\vw) + h(\vw),
\end{align*}
where $c_1 > 0$ a constant. 
Let $F(\la) = \left|f_1(\vw_\la) - g_1(\vw_\la)\right|$.
The following lemma shows that $F(\la)$ is \emph{quasiconvex} in $\la$ under some mild conditions, and its signed gradient can be efficiently computed. 
\begin{restatable}[Quasi-convexity of $F(\lambda)$]{lemma}{quasiconvexity}
For $d=1$, if $f_1(\cdot)$, $g_1(\cdot)$, and $h(\cdot)$ satisfy
\begin{enumerate}
    \item $h(\vw) = 0$ ~~or
    \item if $f_1(\cdot)$, $g_1(\cdot)$, and $h(\cdot)$ are twice differentiable, $\la \nabla^2 f_1(\vw_\la) + (c_1-{\la}) \nabla^2 g_1(\vw_\la) + \nabla^2 h(\vw_\la) \succ 0$ for every $\la \in [0, c_1]$,
\end{enumerate}
then $F(\la)$ is quasi-convex, i.e., $F(t \la+ (1-t)\la')\leq \max {\big \{}F(\la),F(\la'){\big \}}$ for all $t \in [0,1]$ and $\la,\la'$.
Also, if $F(\cdot) \neq 0$, then $\partial_{\la}{F(\la)} = \{v\}$ and $\sign\left(v\right) = \sign{(g_1(\vw_\la)- f_1(\vw_\la))}$.\label{lemma:quasiconvex}
\end{restatable}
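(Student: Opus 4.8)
The plan is to reduce both assertions to a single monotonicity fact: that the \emph{signed} disparity $D(\la) := f_1(\vw_\la) - g_1(\vw_\la)$ is non-increasing in $\la$ on $[0,c_1]$. Given this, quasi-convexity of $F=|D|$ follows by a short case split. Fixing $\la < \la'$ and setting $\la_t = t\la+(1-t)\la' \in [\la,\la']$, monotonicity gives $D(\la)\ge D(\la_t)\ge D(\la')$; if $D(\la_t)\ge 0$ then $F(\la_t)=D(\la_t)\le D(\la)=F(\la)$, and if $D(\la_t)<0$ then $F(\la_t)=-D(\la_t)\le -D(\la')=F(\la')$, so in either case $F(\la_t)\le \max\{F(\la),F(\la')\}$, which is exactly the claimed quasi-convexity. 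The picture is simply that a monotone $D$ makes $|D|$ unimodal: it decreases while $D>0$ and increases after $D$ crosses zero. For the gradient claim, wherever $D(\la)\neq 0$ we have $\dv{F(\la)}{\la} = \sign(D(\la))\,\dv{D(\la)}{\la}$, so once we show $\dv{D(\la)}{\la}\le 0$ it follows that $\sign\!\left(\dv{F(\la)}{\la}\right) = -\sign(D(\la)) = \sign(g_1(\vw_\la)-f_1(\vw_\la))$, as stated.

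Under condition 2 I would obtain the monotonicity by implicit differentiation of the inner optimality condition. Writing $\Phi(\vw,\la) = \la f_1(\vw) + (c_1-\la)g_1(\vw) + h(\vw)$, stationarity reads $\nabla_\vw \Phi(\vw_\la,\la)=\vzero$; differentiating this identity in $\la$ and solving gives $\dv{\vw_\la}{\la} = -H(\la)^{-1}\vv$, where $\vv := \nabla f_1(\vw_\la)-\nabla g_1(\vw_\la)$ and $H(\la) := \la\nabla^2 f_1(\vw_\la)+(c_1-\la)\nabla^2 g_1(\vw_\la)+\nabla^2 h(\vw_\la)$ is the inner Hessian, which condition 2 assumes to be positive definite (hence invertible). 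Since $\dv{D(\la)}{\la} = \vv^\top \dv{\vw_\la}{\la} = -\vv^\top H(\la)^{-1}\vv$ and $H(\la)\succ 0$ implies $H(\la)^{-1}\succ 0$, I get $\dv{D(\la)}{\la}\le 0$. This one computation simultaneously yields the monotonicity (hence quasi-convexity) and the signed-gradient formula.

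Under condition 1 ($h\equiv 0$), where strict convexity and hence smoothness of $\la\mapsto\vw_\la$ may fail, I would instead argue monotonicity variationally, using only that the inner minimizers exist. Testing the optimality of $\vw_\la$ against $\vw_{\la'}$ and vice versa and adding the two inequalities, the objective-value terms recombine (and the $h$-terms, being zero, drop out) to give $(\la-\la')\big(D(\la)-D(\la')\big)\le 0$; for $\la<\la'$ this forces $D(\la)\ge D(\la')$, i.e.\ $D$ is non-increasing, so quasi-convexity again follows from the case split above. It is worth noting that this variational argument in fact never uses $h=0$ beyond the cancellation, so it covers the general case as well.

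The main obstacle, I expect, is justifying the differentiation of $\la\mapsto\vw_\la$ needed for the explicit gradient: the implicit-function step is exactly what the positive-definiteness in condition 2 licenses, whereas under condition 1 the minimizer may be non-unique or non-smooth, so the signed-gradient statement there should be read through one-sided/directional derivatives (or via a strongly convex regularization and a limiting argument). Two further edge cases must be dispatched: the kink of $|\cdot|$ at $D(\la)=0$, where $F$ need not be differentiable but the unimodality argument still delivers quasi-convexity; and the degenerate situation $\vv=\vzero$, where $\dv{D(\la)}{\la}=0$ and the sign identity is to be interpreted as vacuous.
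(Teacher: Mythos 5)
Your proposal is correct and follows essentially the same route as the paper's proof: reduce everything to the monotonicity of the signed disparity $f_1(\vw_\la)-g_1(\vw_\la)$, establish it under condition 2 by implicit differentiation of the stationarity equation (yielding $-\vv^\top H(\la)^{-1}\vv\le 0$) and under condition 1 by combining the two inner-optimality inequalities, exactly as in the paper's Case 3 of its contradiction argument. Your version is slightly cleaner in that it adds the optimality inequalities directly rather than running the paper's three-way contradiction, and your observation that this variational step never actually uses $h\equiv 0$ (the $h$-terms cancel upon addition) is a correct, mild strengthening of the stated hypothesis.
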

\begin{remark}
The quasiconvexity of $F(\lambda)$ is valid when at least one of the conditions in Lemma~\ref{lemma:quasiconvex} holds.
For the second condition, if $f_1(\cdot)$, $g_1(\cdot)$, and $h(\cdot)$ are convex, this condition will hold unless all the three functions share their stationary points, which is very unlikely.
While there is no theoretical guarantee for the non-convex settings, FairBatch still shows on par or better results than the other fairness approaches in general settings where the functions may not be convex (see Sec.~\ref{sec:experiments}).
\end{remark}
The proof for Lemma~\ref{lemma:quasiconvex} can be found in Sec.~\ref{appendix:lemma1}.
Note that quasiconvexity immediately implies a unique minimum~\citep{boyd2004convex}.
Thus, we design the following signed gradient-based optimization algorithm:
\begin{align*}
    \forall t \in \{0, 1, \ldots\}: \la^{(t+1)} = \la^{(t)} - \alpha \cdot \sign(g_1(\vw_\la)-f_1(\vw_\la)).
\end{align*}
This algorithm increases $\la$ by $\alpha$ if $f_1(\vw_\la) \leq g_1(\vw_\la)$ and decreases $\la$ by $\alpha$ otherwise.
Recall that this is consistent with our intuition: It increases the sampling probability of a disadvantageous group and decreases that of an advantageous group.
The following proposition shows that the proposed algorithm converges to the optimal solution. 
\begin{proposition}
Let $\la^* = \arg\min_{\la} F(\la)$ and $t \in \sZ^{0+}$. Then, $|\la^{(t)} - \la^*| \leq \max\{|\la^{(0)} - \la^*| - t\alpha, \alpha\}$.
\end{proposition}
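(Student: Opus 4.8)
The plan is to reduce the proposition to a one-dimensional ``fixed-step descent toward a target'' argument, leveraging Lemma~\ref{lemma:quasiconvex}. First I would record the two facts the lemma supplies: $F(\la)$ is quasiconvex with a unique minimizer $\la^*$, and $\sign\!\left(\dv{F(\la)}{\la}\right) = \sign(g_1(\vw_\la) - f_1(\vw_\la))$, which is exactly the quantity driving the update. Quasiconvexity in one variable with a unique minimizer means $F$ is non-increasing on $(-\infty,\la^*]$ and non-decreasing on $[\la^*,\infty)$. Moreover, since $F = |f_1 - g_1| \ge 0$ attains its minimum uniquely at $\la^*$, we have $f_1(\vw_\la) \ne g_1(\vw_\la)$ for every $\la \ne \la^*$, so the signed gradient equals exactly $\pm 1$ away from $\la^*$.

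Next I would translate this into a one-step statement for $d_t := |\la^{(t)} - \la^*|$. If $\la^{(t)} > \la^*$, then $F$ is non-decreasing there, so $\sign(g_1 - f_1) = +1$ and the update gives $\la^{(t+1)} = \la^{(t)} - \alpha$, a step toward $\la^*$; symmetrically, if $\la^{(t)} < \la^*$ the update adds $\alpha$. In both cases $\la^{(t)}$ moves toward $\la^*$ by exactly $\alpha$. The elementary geometry of moving a point a fixed distance $\alpha$ toward a target then yields a dichotomy: if $d_t > \alpha$ the step falls short of $\la^*$ and $d_{t+1} = d_t - \alpha$; if $d_t \le \alpha$ the step reaches or overshoots $\la^*$ and $d_{t+1} = |\alpha - d_t| \le \alpha$. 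In particular, once $d_t \le \alpha$ the iterate stays within $\alpha$ of $\la^*$ thereafter.

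With this recursion in hand, I would close by induction on $t$. The claim $d_t \le \max\{d_0 - t\alpha,\,\alpha\}$ holds trivially at $t=0$. For the inductive step, if $d_t \le \alpha$ then $d_{t+1} \le \alpha \le \max\{d_0 - (t+1)\alpha,\,\alpha\}$; and if $d_t > \alpha$, the induction hypothesis forces the maximum to be attained by its first argument, so $d_t \le d_0 - t\alpha$ and hence $d_{t+1} = d_t - \alpha \le d_0 - (t+1)\alpha$. Either way the bound propagates.

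The hard part will be making the direction-of-movement step fully rigorous: I must argue from quasiconvexity (rather than from differentiability alone) that $\sign(g_1 - f_1)$ points toward $\la^*$ and is nonzero for $\la \ne \la^*$, and I must handle the overshoot regime $d_t \le \alpha$ carefully, where the naive ``distance shrinks by $\alpha$'' fails and is replaced by ``distance stays $\le \alpha$'' --- precisely what the $\max\{\cdot,\alpha\}$ on the right-hand side absorbs. A secondary caveat worth a remark is the feasibility constraint $\la \in [0,c_1]$: the stated update is unprojected, so either one reads $\la^*$ as the unconstrained minimizer, or one notes that projecting onto $[0,c_1]$ only contracts distances to $\la^* \in [0,c_1]$ and therefore preserves the bound.
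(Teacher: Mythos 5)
The paper states this proposition without proof, so there is nothing to compare against; your argument supplies the missing proof and is sound. The skeleton --- (i) use Lemma~\ref{lemma:quasiconvex} to show that away from $\la^*$ the update moves $\la^{(t)}$ toward $\la^*$ by exactly $\alpha$, (ii) split into the regimes $d_t>\alpha$ (distance shrinks by $\alpha$) and $d_t\le\alpha$ (distance stays $\le\alpha$, absorbing overshoot), (iii) induct --- is exactly what the statement requires, and your induction is carried out correctly. Two of the caveats you flag deserve to be made explicit rather than parenthetical. First, the nonvanishing of $\sign(g_1(\vw_\la)-f_1(\vw_\la))$ for $\la\neq\la^*$ rests on uniqueness of the minimizer; the paper asserts this follows from quasiconvexity, but that is only guaranteed under Case~2 of the lemma, where $f_1(\vw_\la)-g_1(\vw_\la)$ is \emph{strictly} decreasing. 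Under Case~1 it is merely nonincreasing, so $F$ may vanish on an interval, and the cleanest fix is to read $|\la^{(t)}-\la^*|$ as the distance to the set of minimizers (the same one-step dichotomy and induction go through verbatim). Second, the boundary issue is real: if $\la^*$ lies on the boundary of $[0,c_1]$ with $F(\la^*)>0$, the unprojected iterate at $\la^{(t)}=\la^*$ steps outside the feasible set; your observation that projection onto $[0,c_1]$ is a contraction toward $\la^*$ and hence preserves the bound is the right repair and should be stated as part of the proof rather than as an aside.
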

\begin{remark}
$F(\la)$ is not necessarily convex even when we assume the inner objective functions $f_1(\cdot)$ and $g_1(\cdot)$ are convex or even strongly convex.
See Sec.~\ref{sec:a_4_counterexample_to_convexity} for a counter example.
\end{remark}

\subsection{Update Rule for $d \geq 1$}
\label{sec:algorithm}
We now develop an efficient update algorithm for the following general bilevel optimization:
\begin{align*}
&\min_{\vla \in \Lambda} {\max_{i=1,\ldots, d}\left|f_i(\vw_\vla) - g_i(\vw_\vla)\right|}, &\vw_\vla= \argmin_{\vw} {\textstyle\sum}_{i=1}^{d}\left[ \la_i f_i(\vw) + (c_i - \la_i) g_i(\vw)\right] + h(\vw).\nonumber
\end{align*}
Here, $\Lambda = [0,c_1]\times[0,c_2]\times\cdots\times[0,c_d]$, where $c_i$'s are some positive constants.
Denoting by $F(\vla)$ the outer objective function, let us first derive the gradient of it. 
Under some mild conditions (see Sec.~\ref{sec:a_grad_derivation}) on $f_i(\cdot)$'s, $g_i(\cdot)$'s, and $h(\cdot)$:
\begin{align*}
\gamma_i := \sign{\left(g_{i^*}(\vw)-f_{i^*}(\vw)\right)}(\nabla f_{i^*}(\vw) - \nabla g_{i^*}(\vw))^\top \mH_{\vla}^{-1} (\nabla f_i(\vw) - \nabla g_i(\vw)) \in \partial_{\la_i}{F(\vla)}, ~\forall i,
\end{align*}
where $i^* = \argmax_{i} |f_i(\vw)-g_i(\vw)|$, and $\mH_\vla$ is positive definite. 
See Sec.~\ref{sec:a_grad_derivation} for the derivation.
Since subdifferential is always a convex set, it follows that $\vgamma := (\gamma_1, \gamma_2, \ldots, \gamma_d) \in \partial_\vla F(\vla)$.
Computing the subgradient $\vgamma$ requires us to compute $\mH_\vla$, which involves the Hessian matrices of the inner objective function.
To avoid this expensive computation, we approximate $\vgamma \approx (0, 0, \ldots, \gamma_{i^\star}, \ldots, 0)$.
See Sec.~\ref{sec:a_approx_rationale} for the rationale and intuition behind this approximation.
Then, similar to the case of $d=1$, we have $\sign(\vgamma) = (0, 0, \ldots, \sign\left(g_{i^*}(\vw_\la)-f_{i^*}(\vw_\la)\right), 0, \ldots, 0)$.
This gives us the general update rule of \fb{} (see Sec.~\ref{appendix:algorithms} for pseudocode):
\begin{align}
    \forall t \in \{0, 1, \ldots\}: \la^{(t+1)}_{i^*} = \la^{(t)}_{i^*} - \alpha \cdot \sign(g_{i^*}(\vw_\la)-f_{i^*}(\vw_\la)),~~
    \la^{(t+1)}_{i} = \la^{(t)}_{i},~\forall i \neq i^*.
    \nonumber
\end{align}

\section{Experiments}
\label{sec:experiments}

We use logistic regression in all experiments except for Sec.~\ref{sec:finetuning} where we fine-tune ResNet18~\citep{He_2016} and GoogLeNet~\citep{Szegedy_2015} in order to demonstrate \fb{}'s ability to improve fairness of pre-trained models.
We evaluate all models on separate test sets and repeat all experiments with 10 different random seeds.
We use PyTorch, and our experiments are performed on a server with Intel i7-6850 CPUs and NVIDIA TITAN Xp GPUs.
See Sec.~\ref{appendix:othersettings} for more details.

\paragraph{Measuring Fairness}
Here we first focus on the equal opportunity (EO) and demographic parity (DP) measures in Sec.~\ref{sec:performances} and Sec.~\ref{sec:speedup}. The equalized odds (ED) measure is used in Sec.~\ref{sec:finetuning} and Sec.~\ref{appendix:equalizedodds}.
To quantify EO, ED, and DP, we compute the disparity between sensitive groups: {\em EO disparity} = $\max_{z \in \mathcal{\sZ}}|\Pr(\hat{\ry}=1|\rz=z,\ry=1)-\Pr(\hat{\ry}=1|\ry=1)|$, {\em ED disparity} = $\max_{z \in \mathcal{\sZ}, y \in \mathcal{\sY}, \hat{y} \in \mathcal{\hat{\sY}}}|\Pr(\hat{\ry}=\hat{y}|\rz=z,\ry=y)-\Pr(\hat{\ry}=\hat{y}|\ry=y)|$, and {\em DP disparity} = $\max_{z \in \mathcal{\sZ}}|\Pr(\hat{\ry}=1|\rz=z)-\Pr(\hat{\ry}=1)|$.
As we discussed in Sec.~\ref{sec:fairbatch}, EO has a single-dimension outer optimization where the number of disparities $d=1$ while ED and DP have multi-dimensional outer optimizations where $d>1$.

\paragraph{Datasets} We generate a synthetic dataset of 3,000 examples with two non-sensitive attributes ($\rx_1$, $\rx_2$), a binary sensitive attribute $\rz$, and a binary label $\ry$, using a method similar to the one in~\citep{DBLP:conf/aistats/ZafarVGG17}. 
A tuple ($\rx_1$, $\rx_2$, $\ry$) is randomly generated based on the two Gaussian distributions: $(\rx_1,\rx_2)|\ry=0 \sim \mathcal{N}([-2; -2], [10, 1; 1, 3])$ and $(\rx_1,\rx_2)|\ry=1 \sim \mathcal{N}([2; 2], [5, 1; 1, 5])$. For $\rz$, we generate biased data using an \emph{unfair scenario} $\Pr(\rz=1)=\Pr((\rx'_1, \rx'_2)|\ry=1)/[\Pr((\rx'_1, \rx'_2)|\ry=0)+\Pr((\rx'_1, \rx'_2)|\ry=1)]$ where $(\rx'_1, \rx'_2)=(\rx_1\cos(\pi/4)-\rx_2\sin(\pi/4), \rx_1\sin(\pi/4)+\rx_2\cos(\pi/4))$. 

We use the real benchmark datasets: ProPublica COMPAS~\citep{Compas} and AdultCensus~\citep{DBLP:conf/kdd/Kohavi96} datasets with 5,278 and 43,131 examples, respectively. We use the same pre-processing as in IBM's AI Fairness 360~\citep{aif360-oct-2018} and use {\sc gender} as the sensitive attribute. We also employ the UTKFace dataset~\citep{zhifei2017cvpr} with 23,708 images to demonstrate the fine-tuning ability of \fb{} in Sec.~\ref{sec:finetuning}.

\paragraph{Baselines} We employ three types of baselines: (1) non-fair training with logistic regression (LR); (2) fair training via pre-processing; and (3) fair training via in-processing.

For pre-processing methods, we first consider a simple approach that we call Cutting, which evens the data sizes of sensitive groups via saturating them to the smallest-group data size. One can think of a similar alternative approach: \emph{Boosting} all of the smaller-group data sizes to the largest one, but we do not report herein due to similar performances that we found relative to Cutting. The other two are the state of the arts: reweighing~\citep{DBLP:journals/kais/KamiranC11} (RW) and Label Bias Correction~\citep{pmlr-v108-jiang20a} (LBC). 
RW intends to balance importance levels across sensitive groups via example weighting, but sticks with these weights throughout the entire model training, unlike FairBatch.
LBC iteratively trains an entire model with example weighting towards an unbiased data distribution.

For in-processing methods, we compare with the following three: Fairness Constraints~\citep{DBLP:conf/aistats/ZafarVGG17, DBLP:conf/www/ZafarVGG17} (FC), Adversarial Debiasing~\citep{DBLP:conf/aies/ZhangLM18} (AD), and AdaFair~\citep{10.1145/3357384.3357974}. FC incorporates a regularization term in an effort to reduce the disparities among sensitive groups. AD is an adversarial learning approach that intends to maximize the independence between the predicted labels and sensitive attributes. 
In our experiments, a slight modification is made to AD for improving training stability: Not employing one regularization term used for restricting the training direction.
AdaFair is an ensemble technique that equips the prominent AdaBoost~\citep{friedman00additive} with a fairness aspect. Here the examples that lead to unfair and inaccurate performances are considered to be the \emph{difficult} instances. In our experiments, natural generalization of AdaFair intended for ED is made to encompass EO and DP; see Sec.~\ref{appendix:adafairextension} for the generalization. While AdaFair bears spiritual similarity to \fb{} in a sense that mistreated examples are weighted progressively, it comes with a significant distinction in update scale. It is basically a boosting technique; hence such updates are done in distinctive predictors through different \emph{rounds}; see Sec.~\ref{sec:relatedwork} for details.

\paragraph{\fb{} Settings}
To set $\alpha$, we start from a candidate set of values within the range [0.0001, 0.05] and use cross-validation on the training set to choose the value that results in the highest accuracy with low fairness violation. The default batch sizes are: 100 (synthetic); 200 (COMPAS), 1,000 (AdultCensus); and 32 (UTKFace).

\subsection{Accuracy, Fairness, and Runtime}
\label{sec:performances}

Table~\ref{tbl:synthetic_real_eqopp} compares \fb{} against the other approaches on the synthetic, COMPAS, and AdultCensus test sets w.r.t.\@ accuracy, EO disparity, and complexity (reflected in the number of epochs).
In Sec.~\ref{appendix:fairnesscurve}, we also present the convergence plot of EO disparity as a function of the number of epochs.
LR in row 1 is logistic regression without any fairness technique.
The pre-processing techniques in rows 2--4 reduce EO disparity yet while sacrificing the accuracy performance.
The in-processing techniques in rows 5--7 further reduce EO disparity yet still sacrificing accuracy.
\fb{}, presented in the last row, offers comparable (or even greater) fairness performance while sacrificing less accuracy. We also present accuracy and fairness trade-off curves of \fb{} in Sec.~\ref{appendix:tradeoffcurve}.
One key implementation benefit is reflected in the small numbers of epochs.
We also obtain consistent wall clock times, presented in Sec.~\ref{appendix:wallclocktimes}.
As mentioned earlier, AdaFair is the most similar in spirit to \fb{} as it adjusts example weights based on the fairness performances of prior models.
We demonstrate in Sec.~\ref{appendix:adafair} that \fb{} and AdaFair indeed show similar convergence behaviors yet in different scales (rounds for AdaFair vs. epochs for FairBatch).
One distinctive feature of \fb{} relative to AdaFair is the use of a \emph{single} model training, thus enabling much faster speed (22.5--96x).
We also make similar comparisons yet w.r.t.\@ another fairness measure: DP disparity. See Table~\ref{tbl:synthetic_real_dp}.
Recall that minimizing DP disparity involves adjusting two hyperparameters ($\lambda_1$, $\lambda_2$), which also means that $d=2$.
Although \fb{}'s theoretical guarantees hold only when using one hyperparameter (i.e., $d=1$), we nonetheless see similar results where \fb{} is on par or better than the other approaches, while being the most robust in all aspects.

\begin{table}[t]
  \caption{Performances on the synthetic, COMPAS, and AdultCensus test sets w.r.t.\@ equal opportunity (EO). We compare \fb{} with three types of baselines: (1) non-fair method: LR; (2) fair training via pre-processing: Cutting, RW~\citep{DBLP:journals/kais/KamiranC11}, and LBC~\citep{pmlr-v108-jiang20a}; (3) fair training via in-processing: FC~\citep{DBLP:conf/www/ZafarVGG17}, AD~\citep{DBLP:conf/aies/ZhangLM18}, and AdaFair~\citep{10.1145/3357384.3357974}. Experiments are repeated 10 times.}
  \vspace{-0.1cm}
  \label{tbl:synthetic_real_eqopp}
  \begin{adjustwidth}{-0.2cm}{}
  \centering
\scalebox{0.95}{
  \begin{tabular}{l@{\hspace{3pt}}c@{\hspace{3pt}}c@{\hspace{3pt}}c@{\hspace{3pt}}c@{\hspace{3pt}}c@{\hspace{3pt}}c@{\hspace{3pt}}c@{\hspace{3pt}}c@{\hspace{3pt}}c@{\hspace{4pt}}}
    \toprule
      & \multicolumn{3}{c}{Synthetic} & \multicolumn{3}{c}{COMPAS} & \multicolumn{3}{c}{AdultCensus} \\
    \cmidrule(r){1-10}
      Method & Acc. & EO Disp. & Epochs & Acc. & EO Disp. & Epochs & Acc. & EO Disp. & Epochs \\
    \midrule
    LR & .885$\pm$.000 & .115$\pm$.000 & 400 & .681$\pm$.002 & .239$\pm$.006 & 300 & .845$\pm$.001 & .054$\pm$.005 & 300\\
    \cmidrule(l){1-10}
    Cutting & .858$\pm$.001 & .028$\pm$.002 & 800 & .674$\pm$.005 & .055$\pm$.018 & 600 & .802$\pm$.002 & .054$\pm$.007 & 600\\
    RW & .858$\pm$.000 & .020$\pm$.000 & 800 & .685$\pm$.000 & .137$\pm$.000 & 300 & .835$\pm$.001 & .134$\pm$.006 & 100\\
    LBC & .858$\pm$.001 & .022$\pm$.000 & 11200 & .673$\pm$.002 & .031$\pm$.006 & 3900 & .841$\pm$.003 & \textbf{.011$\pm$.003} & 6300\\
    \cmidrule(l){1-10}
    FC & .833$\pm$.001 & \textbf{.007$\pm$.000} & 700 & .656$\pm$.006 & .059$\pm$.028 & 100 & .844$\pm$.001 & .021$\pm$.004 & 300\\
    AD & .837$\pm$.010 & .026$\pm$.007 & 200 & .683$\pm$.001 & .067$\pm$.029 & 300 & .841$\pm$.003 & \underline{.016$\pm$.005} & 400\\
    AdaFair & .868$\pm$.000 & .043$\pm$.001 & 16000 & .664$\pm$.004 & \textbf{.018$\pm$.004} & 9600 & .844$\pm$.001 & .038$\pm$.004 & 9000\\
    \cmidrule(l){1-10}
    \textbf{\fb{}} & .855$\pm$.000 & \underline{.012$\pm$.001} & 300 & .681$\pm$.001 & \underline{.022$\pm$.005} & 100 & .844$\pm$.001 & \textbf{.011$\pm$.003} & 400\\ 

    \bottomrule
  \end{tabular}
  }
  \end{adjustwidth}
  \vspace{-0.2cm}
\end{table}

\begin{table}[t]
  \caption{Performances on the synthetic, COMPAS, and AdultCensus test sets w.r.t.\@ demographic parity (DP).
  The other settings are identical to those in Table~\ref{tbl:synthetic_real_eqopp}.}
  \vspace{-0.1cm}
  \label{tbl:synthetic_real_dp}
  \begin{adjustwidth}{-0.2cm}{}
  \centering
  \scalebox{0.95}{
  \begin{tabular}{l@{\hspace{3pt}}c@{\hspace{3pt}}c@{\hspace{3pt}}c@{\hspace{3pt}}c@{\hspace{3pt}}c@{\hspace{3pt}}c@{\hspace{3pt}}c@{\hspace{3pt}}c@{\hspace{3pt}}c@{\hspace{4pt}}}
    \toprule
      & \multicolumn{3}{c}{Synthetic} & \multicolumn{3}{c}{COMPAS} & \multicolumn{3}{c}{AdultCensus} \\
    \cmidrule(r){1-10}
      Method & Acc. & DP Disp. & Epochs & Acc. & DP Disp. & Epochs & Acc. & DP Disp. & Epochs \\
    \midrule
    LR & .885$\pm$.000 & .257$\pm$.000 & 400 & .681$\pm$.002 & .192$\pm$.006 & 300 & .845$\pm$.001 & .125$\pm$.001 & 300\\
    \cmidrule(l){1-10}
    Cutting & .885$\pm$.001 & .258$\pm$.001 & 500 & .677$\pm$.004 & .205$\pm$.025 & 400 & .846$\pm$.001 & .123$\pm$.002 & 300\\
    RW & .857$\pm$.000 & .164$\pm$.001 & 400 & .685$\pm$.000 & .103$\pm$.000 & 300 & .835$\pm$.001 & .052$\pm$.003 & 300\\
    LBC & .768$\pm$.000 & \underline{.042$\pm$.001} & 16000 & .671$\pm$.002 & \textbf{.032$\pm$.009} & 7800 & .815$\pm$.003 & \underline{.011$\pm$.002} & 12600\\
    \cmidrule(l){1-10}
    FC & .785$\pm$.013 & .058$\pm$.010 & 600 & .684$\pm$.001 & .083$\pm$.015 & 70 & .812$\pm$.009 & .025$\pm$.006 & 100\\
    AD & .812$\pm$.008 & .063$\pm$.014 & 700 & .683$\pm$.002 & .054$\pm$.019 & 550 & .815$\pm$.008 & .018$\pm$.004 & 400\\
    AdaFair & .784$\pm$.001 & .089$\pm$.001 & 52000 & .642$\pm$.004 & \underline{.033$\pm$.011} & 6300 & .825$\pm$.002 & .040$\pm$.001 & 27000\\
    \cmidrule(l){1-10}
    \textbf{\fb{}} & .794$\pm$.001 & \textbf{.040$\pm$.001} & 450 & .681$\pm$.001 & .036$\pm$.023 & 300 & .823$\pm$.001 & \textbf{.010$\pm$.005} & 600\\

    \bottomrule
  \end{tabular}
  }
  \end{adjustwidth}
  \vspace{-0.4cm}
\end{table}

\subsection{Fine-tuning Pretrained Unfair Models for Fairness}
\label{sec:finetuning}
While Tables~\ref{tbl:synthetic_real_eqopp} and~\ref{tbl:synthetic_real_dp} already demonstrate \fb{}'s performance against the state of the arts, in this section we emphasize the \emph{usability} of \fb{} by showing how it can improve fairness of any pretrained unfair model via fine-tuning and only compare it with Cutting, which is also easy to adopt.
Table~\ref{tbl:utk_finetuning} shows how \fb{} improves fairness of pre-trained models (ResNet18~\citep{He_2016} and GoogLeNet~\citep{Szegedy_2015}) on the UTKFace dataset~\citep{zhifei2017cvpr}. Each image has three types of attributes: {\sc gender}, {\sc race}, and {\sc age}. 
We use {\sc race} as the sensitive attribute and consider two scenarios where the label attribute is {\sc gender} or {\sc age}. 
While {\sc gender} is binary, {\sc age} is multi-valued ($<$21, 21--40, 41--60, and $>$60), so we extend \fb{} in a straightforward fashion; see Sec.~\ref{appendix:multiclassification} for details.
Both Cutting and \fb{} reduce the ED disparities of the original pre-trained models.
However, only \fb{} does so without sacrificing accuracy performance. 

\begin{table}[t]
\vspace{0.3cm}
  \caption{Performances of the pre-trained models fine-tuned with \fb{} on the UTKFace test set w.r.t.\@ equalized odds (ED) for two fairness scenarios. While Tables~\ref{tbl:synthetic_real_eqopp} and~\ref{tbl:synthetic_real_dp} already demonstrate \fb{}'s performance against the state of the arts, the emphasis here is more on \fb{}'s usability where it is easy to adopt and yet improves the fairness of existing models.}
  \label{tbl:utk_finetuning}
  \centering
\scalebox{0.95}{
  \begin{tabular}{llc@{\hspace{5pt}}c@{\hspace{5pt}}c@{\hspace{7pt}}c@{\hspace{5pt}}c@{\hspace{5pt}}c}
    \toprule
    & & \multicolumn{3}{c}{\rz: {\sc race}, \ry: {\sc gender}} & \multicolumn{3}{c}{\rz: {\sc race}, \ry: {\sc age}} \\
    \cmidrule(r){1-8}
     Pre-trained model & Method & Acc. & ED Disp. & Epochs & Acc. & ED Disp. & Epochs \\
    \midrule
    \multirow{3}{*}{ResNet18} & Original & .893$\pm$.002 & \underline{.086$\pm$.012} & 19 & .722$\pm$.011 & .311$\pm$.053 & 10\\
    & Cutting & {\color{cordovan} .592$\pm$.020} & .099$\pm$.014 & 18 & {\color{cordovan} .466$\pm$.018} & \textbf{.139$\pm$.021} & 20 \\
    & \textbf{\fb{}} & .894$\pm$.002 & \textbf{.063$\pm$.013} & 30 & .758$\pm$.004 & \underline{.220$\pm$.016} & 10\\

    \cmidrule(l){1-8}
    
    \multirow{3}{*}{GoogLeNet} & Original & .888$\pm$.003 & .105$\pm$.016 & 20 & .746$\pm$.006 & .294$\pm$.034 & 14\\
    & Cutting & {\color{cordovan} .606$\pm$.010} & \underline{.076$\pm$.017} & 20 & {\color{cordovan} .495$\pm$.017} & \textbf{.168$\pm$.033} & 9\\
    & \textbf{\fb{}} & .891$\pm$.002 & \textbf{.061$\pm$.006} & 11 & .741$\pm$.018 & \underline{.202$\pm$.019} & 8\\

    \bottomrule
  \end{tabular}
  }
\end{table}

\subsection{Compatibility with Other Batch Selection Techniques}
\label{sec:speedup}
We demonstrate another key aspect of \fb{}: \emph{Compatibility} with existing batch selection approaches that use importance sampling for faster convergence in training.
The key functionality of the prior batch selection techniques is that examples considered to be ``important'' are given higher weights so as to be sampled more frequently.
FairBatch can easily be tuned to accommodate such functionality: determining the batch-ratios of sensitive groups and then sampling using the importance weights per group.
We evaluate \fb{} combined with one prominent technique, loss-based weighting~\citep{loshchilov-ICLR16batchsel}, on our synthetic dataset using EO and DP.
We find that \fb{} indeed converges more quickly. It uses about 50 fewer epochs with similar fairness performances; see Sec.~\ref{appendix:importancesampling} for the EO and DP convergence plots.

\section{Related Work}
\label{sec:relatedwork}

\paragraph{Model Fairness}

Various fairness measures have been proposed to reflect legal and social issues~\citep{narayanan2018translation}.
Among them, we focus on group fairness measures: equal opportunity~\citep{DBLP:conf/nips/HardtPNS16},  equalized odds~\citep{DBLP:conf/nips/HardtPNS16}, and demographic parity~\citep{DBLP:conf/kdd/FeldmanFMSV15}.
A variety of techniques have been proposed and can be categorized into
(1) pre-processing techniques~\citep{DBLP:journals/kais/KamiranC11, DBLP:conf/icml/ZemelWSPD13, DBLP:conf/kdd/FeldmanFMSV15, DBLP:conf/nips/CalmonWVRV17, choi2020fair, pmlr-v108-jiang20a}, which debias or reweight data, (2) in-processing techniques~\citep{DBLP:conf/pkdd/KamishimaAAS12, DBLP:conf/aistats/ZafarVGG17, DBLP:conf/www/ZafarVGG17, DBLP:conf/icml/AgarwalBD0W18, DBLP:conf/aies/ZhangLM18, DBLP:conf/alt/CotterJS19, Roh2020FRTrainAM}, which tailor the model training for fairness, and (3) post-processing techniques~\citep{DBLP:conf/icdm/KamiranKZ12, DBLP:conf/nips/HardtPNS16, DBLP:conf/nips/PleissRWKW17, NIPS2019_9437}, which perturb only the model output without touching upon the inside.
Most of these methods require broad changes in data preprocessing, model training, or model outputs in machine learning systems~\citep{DBLP:conf/pods/Venkatasubramanian19}.
In contrast, \fb{} only requires a single-line change in code to replace batch selection while achieving comparable or even greater performances against the state of the arts.

Among the fairness techniques, AdaFair~\citep{10.1145/3357384.3357974} is the most similar in spirit to \fb{}.
AdaFair extends the well-known AdaBoost~\citep{friedman00additive} where examples that lead to poor accuracy or fairness are boosted, i.e., given higher weights during the next round of training a new model that is added to the ensemble.
In comparison, \fb{} is based on theoretical foundations of bilevel optimization and effectively performs the reweighting \emph{during each epoch} (not through rounds), which leads to an order of magnitude improvement in speed as shown in Sec.~\ref{sec:performances}.

Although not our immediate focus, there are other noteworthy fairness measures: (1) individual fairness~\citep{DBLP:conf/innovations/DworkHPRZ12} where close examples should be treated similarly, (2) causality-based fairness~\citep{10.5555/3294771.3294834, NIPS2017_6995, Zhang2018FairnessID, Nabi2018FairIO, 10.1145/3308558.3313559}, which aims to overcome the limitations of non-causal approaches by understanding the causal relationship between attributes, and (3) distributionally robust optimization (DRO)~\citep{Sinha2017CertifyingSD}-based fairness~\citep{pmlr-v80-hashimoto18a}, which achieves accuracy parity without the knowledge of sensitive attribute by balancing the risks across all distributions. 
Extending \fb{} to support these measures is an interesting future work.

Finally, \citet{DBLP:journals/corr/abs-1810-08810} describe three causes of unfairness that help clarify FairBatch's fairness contributions: (1) minimizing average error fits majority populations, (2) bias encoded in data, and (3) the need to explore and gather more data. FairBatch addresses the cause (1) via balancing the sensitive group ratios within a batch. FairBatch also addresses (2) in some cases. For example, consider the recidivism prediction problem described in~\citep{DBLP:journals/corr/abs-1810-08810} where minority populations have biased labels. In this case, FairBatch can be configured to make the recidivism prediction rate for the minority population similar to those of other populations. There may be other types of data bias that FairBatch is not able to address. Finally, FairBatch does not directly address (3) where one must gather more data for better fairness. Instead, there is a recent line of work that studies data collection techniques~\citep{DBLP:journals/corr/abs-2003-04549} for fairness.

\paragraph{Batch Selection}

The batch selection literature for SGD focuses on analyzing the effect of batch sizes~\citep{DBLP:conf/iclr/KeskarMNST17, masters2018revisiting} and various sampling techniques~\citep{NIPS2016_6245, gurbuzbalaban2019random}. 
More recently, importance sampling techniques have been proposed for faster convergence~\citep{loshchilov-ICLR16batchsel, alain2015variance, stich2017safe, csiba2018importance, DBLP:conf/icml/KatharopoulosF18, johnson2018training}.
In comparison, \fb{} takes the novel approach of using batch selection for better fairness and is compatible with other existing techniques.

\section{Conclusion}
\label{conclusion}

We addressed model fairness via the lens of bilevel optimization and proposed the \fb{} batch selection algorithm.
The bilevel optimization provides a natural framework where the inner optimizer is SGD, and the outer optimizer performs adaptive batch selection to improve fairness.
We presented \fb{} for implementing this optimization and showed how its underlying theory supports the fairness measures: equal opportunity, equalized odds, and demographic parity.
We showed that \fb{} offers respectful performances that are on par or even better than the state of the arts w.r.t.\@ all aspects in consideration: accuracy, fairness, and runtime.
Also, \fb{} can readily be adopted to machine learning systems with a minimal change of replacing the batch selection with a single-line of code and be gracefully merged with other batch selection techniques used for faster convergence.

\section*{Acknowledgements}
Yuji Roh and Steven E. Whang were supported by a Google AI Focused Research Award and by the Engineering Research Center Program through the National Research Foundation of Korea (NRF) funded by the Korean Government MSIT (NRF-2018R1A5A1059921).
Kangwook Lee was supported by NSF/Intel Partnership on Machine Learning for Wireless Networking Program under Grant No. CNS-2003129.
Changho Suh was supported by Institute for Information \& communications Technology Planning \& Evaluation (IITP) grant funded by the Korea government (MSIT) (No. 2019-0-01396, Development of framework for analyzing, detecting, mitigating of bias in AI model and training data).

\bibliography{main}
\bibliographystyle{unsrtnat}

\newpage
\appendix
\section{Appendix -- Theory and Algorithms}

\subsection{Demographic Parity}\label{sec:a_2_dp_detail}
We continue from Sec.~\ref{sec:fairtraining} and provide more details on how we can capture demographic parity using our bilevel optimization framework.
\begin{proposition}
If $m_{0,0}=m_{0,1}=m_{1,0}=m_{1,1}$, then $L_{0,0}(\vw) = L_{1,0}(\vw)$ and $L_{0,1}(\vw) = L_{1,1}(\vw)$ can serve as a sufficient condition for demographic parity.
\end{proposition}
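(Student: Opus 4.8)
The plan is to translate the $0$/$1$-loss quantities $L_{y,z}(\vw)$ into conditional prediction rates and then compute the group-wise positive prediction rate directly. First I would observe that, under the $0$/$1$ loss $\ell(\ry_i,\hat{\ry}_i)=\mathds{1}(\ry_i\neq\hat{\ry}_i)$, for each sensitive group $z$ we have $L_{0,z}(\vw) = \Pr(\hat{\ry}=1 \mid \ry=0, \rz=z)$ (the false-positive rate) and $L_{1,z}(\vw) = \Pr(\hat{\ry}=0 \mid \ry=1, \rz=z)$ (the false-negative rate), since the loss picks out exactly the misclassified examples within each label/group cell.

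Next I would expand the positive prediction rate of group $z$ by splitting the positively predicted examples according to their true label. The label-$0$ cell contributes $m_{0,z}\,L_{0,z}(\vw)$ positive predictions, while the label-$1$ cell contributes its true positives, $m_{1,z}\big(1-L_{1,z}(\vw)\big)$. Dividing by $m_{\star,z}=m_{0,z}+m_{1,z}$ gives
\begin{align*}
\Pr(\hat{\ry}=1\mid \rz=z) = \frac{1}{m_{\star,z}}\Big[m_{0,z}\,L_{0,z}(\vw) + m_{1,z}\,\big(1-L_{1,z}(\vw)\big)\Big].
\end{align*}
Here the hypothesis $m_{0,0}=m_{0,1}=m_{1,0}=m_{1,1}$ enters decisively: the two cell sizes within each group are equal, so the label-$0$ and label-$1$ terms carry identical weight $1/2$, yielding $\Pr(\hat{\ry}=1\mid \rz=z)=\tfrac{1}{2}\big(L_{0,z}(\vw)+1-L_{1,z}(\vw)\big)$.

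Finally, I would substitute the stated condition. Since $L_{0,0}(\vw)=L_{1,0}(\vw)$ and $L_{0,1}(\vw)=L_{1,1}(\vw)$, each group's rate collapses to $\Pr(\hat{\ry}=1\mid \rz=z)=\tfrac{1}{2}\big(L_{0,z}(\vw)+1-L_{0,z}(\vw)\big)=\tfrac{1}{2}$ for both $z=0$ and $z=1$. Hence the two groups share the identical positive prediction rate $1/2$, which is precisely demographic parity, establishing the claimed sufficiency.

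The computation is short, so the only real care lies in the bookkeeping of the middle step: correctly recognizing that $L_{1,z}$ is a false-negative rate, so its complement $1-L_{1,z}$ is the true-positive rate, and that the equal-cell-size hypothesis is exactly what forces the label-$0$ and label-$1$ contributions to enter with equal weight. I expect the main subtlety---rather than a genuine obstacle---to be articulating why equal group sizes are essential: without $m_{0,z}=m_{1,z}$ the normalized weights $m_{0,z}/m_{\star,z}$ and $m_{1,z}/m_{\star,z}$ differ, the rate no longer simplifies to $1/2$, and $L_{0,z}=L_{1,z}$ fails to suffice. This is exactly the regime deferred to the more general treatment in this section.
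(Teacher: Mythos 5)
Your proof is correct and follows essentially the same route as the paper's: decompose the group-wise positive prediction rate by true label, rewrite the two contributions in terms of the $0$/$1$-loss quantities $L_{0,z}(\vw)$ and $1-L_{1,z}(\vw)$, and use the equal-cell-size hypothesis to give both terms weight $\tfrac{1}{2}$ so that the stated condition forces the rates to coincide. Your direct evaluation even yields the marginally sharper observation that both rates equal exactly $\tfrac{1}{2}$, whereas the paper only checks that their difference vanishes, but this is the same computation packaged slightly differently.
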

\begin{proof}
Slightly abusing the notation, we denote by $\Pr(\cdot)$ the empirical probability.
The demographic parity is satisfied when $\Pr(\hat{\ry}=1|\rz=0) = \Pr(\hat{\ry}=1|\rz=1)$ holds. 
Thus,
\begin{align*}
&\Pr(\hat{\ry}=1,\ry=0|\rz=0) + \Pr(\hat{\ry}=1,\ry=1|\rz=0)=
\Pr(\hat{\ry}=1,\ry=0|\rz=1) + \Pr(\hat{\ry}=1,\ry=1|\rz=1).
\end{align*}
Since $\ell(|1-y|, \cdot) = 1-\ell(y, \cdot)$, we have
\begin{align*}
&\frac{1}{m_{\star,0}}\sum_{i:y_i=0,z_i=0}(1-\ell(y_i,\hat{y_i})) + \frac{1}{m_{\star,0}}\sum_{i:y_i=1,z_i=0}\ell(y_i,\hat{y_i})\\
=&\frac{1}{m_{\star,1}}\sum_{i:y_i=0,z_i=1}(1-\ell(y_i,\hat{y_i})) +
\frac{1}{m_{\star,1}}\sum_{i:y_i=1,z_i=1}\ell(y_i,\hat{y_i}).
\end{align*}
By replacing $\sum_{i:\ry_i=y,\rz_i=z}\ell(y_i,\hat{y_i}) = m_{y,z}L_{y,z}(\vw)$, 
\begin{align*}
\frac{m_{0,0}}{m_{\star,0}}(1-L_{0,0}(\vw)) + \frac{m_{1,0}}{m_{\star,0}}L_{1,0}(\vw)=
\frac{m_{0,1}}{m_{\star,1}}(1-L_{0,1}(\vw)) + \frac{m_{1,1}}{m_{\star,1}}L_{1,1}(\vw).
\end{align*}
If $m_{0,0}=m_{0,1}=m_{1,0}=m_{1,1}$, this reduces to $L_{0,0}(\vw) = L_{1,0}(\vw)$ and $L_{0,1}(\vw) = L_{1,1}(\vw)$, the above condition reduces to 
\begin{align*}
-L_{0,0}(\vw) + L_{1,0}(\vw)=-L_{0,1}(\vw) + L_{1,1}(\vw).
\end{align*}
A sufficient condition to the above condition is $L_{0,0}(\vw) = L_{1,0}(\vw)$ and $L_{0,1}(\vw) = L_{1,1}(\vw)$.
\end{proof}

In general, the condition of the above proposition does not hold. 
Observe that another sufficient condition to demographic parity is as follows:
\begin{align*}
&\frac{m_{1,0}}{m_{\star,0}}L_{1,0}(\vw) - \frac{m_{1,1}}{m_{\star,1}}L_{1,1}(\vw) = 0 \\
&\frac{m_{0,0}}{m_{\star,0}}L_{0,0}(\vw) - \frac{m_{0,1}}{m_{\star,1}}L_{0,1}(\vw) = \frac{m_{0,0}}{m_{\star,0}} -  \frac{m_{0,1}}{m_{\star,1}}
\end{align*}
Let us define $L'_{1,0}(\vw) = \frac{m_{1,0}}{m_{\star,0}}L_{1,0}(\vw)$, $L'_{1,1}(\vw) = \frac{m_{1,1}}{m_{\star,1}}L_{1,1}(\vw)$, $L'_{0,0}(\vw) = \frac{m_{0,0}}{m_{\star,0}}L_{0,0}(\vw)$, $L'_{0,1}(\vw) =\frac{m_{0,1}}{m_{\star,1}}L_{0,1}(\vw)$, and $c = \frac{m_{0,0}}{m_{\star,0}} -  \frac{m_{0,1}}{m_{\star,1}}$.
Also, define $|x|_c = \max\{x-c, c-x\}$.
Then, we have the following bilevel optimization problem: 
\begin{align*}
&\min_{\vla \in [0,1]\times[0,1]} \max \{|L'_{1,0}(\vw_\vla)-L'_{1,1}(\vw_\vla)|,|L'_{0,0}(\vw_\vla)-L'_{0,1}(\vw_\vla)|_c\},\\
&\vw_\vla = \argmin_{\vw} 
  \la_1 L'_{0,0}(\vw) 
+ (1-\la_1) L'_{1,0}(\vw) 
+ \la_2 L'_{0,1}(\vw) 
+ (1-\la_2) L'_{1,1}(\vw).
\end{align*}

\subsection{Proof for Lemma~\ref{lemma:quasiconvex}}
\label{appendix:lemma1}
We continue from Sec.~\ref{sec:3_2} and provide a full proof for Lemma~\ref{lemma:quasiconvex}.
Here we recall Lemma~\ref{lemma:quasiconvex}.
\quasiconvexity*
\begin{proof}
It it known that a continuous function $f: \mathbb{R} \rightarrow \mathbb{R}$ is quasiconvex if and only if at least one of the following conditions holds: 1) nondecreasing, 2) nonincreasing, and 3) nonincreasing and then nondecreasing~\citep{boyd2004convex}.
We will prove the lemma by showing that the function $F(\la)$ is quasiconvex by showing that it is nonincreasing and then nondecreasing. 
More precisely, we will show that $f_1(\vw_\la) - g_1(\vw_\la)$ is a nonincreasing function.
It is easy to see that this directly implies that $|f_1(\vw_\la) - g_1(\vw_\la)|$ is nonincreasing and then nondecreasing.

\paragraph{Case 1 ($h(\vw) = 0$)}
Consider $\la_1$ and $\la_2$ such that $\la_1 > \la_2$. 
If we can show $f_1(\vw^*_{\la_1}) \leq f_1(\vw^*_{\la_2})$ and $g_1(\vw^*_{\la_1}) \geq g_1(\vw^*_{\la_2})$, then this implies that $f_1(\vw_\la) - g_1(\vw_\la)$ is a nonincreasing function.
Indeed, this is very intuitive: If we increase $\la$, the inner optimization problems puts a higher weight on $f_1(\cdot)$, resulting in a lower value of $f_1(\vw^*)$ and a higher value of $g_1(\vw^*)$. 
We formally show this by contradiction.
By the definition of $\vw_\la$, 
we have the following two conditions:
\begin{align}
\la_1 f_1(\vw^*_{\la_1}) + (c_1-{\la}_1) g_1(\vw^*_{\la_1}) &\leq \la_1 f_1(\vw) + (c_1-{\la}_1) g_1(\vw), ~\forall \vw,\label{eq:1}\\
\la_2 f_1(\vw^*_{\la_2}) + (c_1-{\la}_2) g_1(\vw^*_{\la_2}) &\leq \la_2 f_1(\vw) + (c_1-{\la}_2) g_1(\vw), ~\forall \vw.\label{eq:2}
\end{align}

If the lemma's statement is false, one of the following three events should occur:
\begin{enumerate}
    \item $f_1(\vw^*_{\la_1}) >  f_1(\vw^*_{\la_2})$ and $g_1(\vw^*_{\la_1}) \geq g_1(\vw^*_{\la_2})$: By adding these two inequalities with respective weights $\la_1$ and $c_1-{\la}_1$, we have $\la_1 f_1(\vw^*_{\la_1}) + (c_1-\la_1) g_1(\vw^*_{\la_1}) > \la_1 f_1(\vw^*_{\la_2}) + (c_1-\la_1) g_1(\vw^*_{\la_2})$. This contradicts \eqref{eq:1}.
    \item $f_1(\vw^*_{\la_1}) \leq f_1(\vw^*_{\la_2})$ and $g_1(\vw^*_{\la_1}) < g_1(\vw^*_{\la_2})$: Similarly, by adding these two inequalities with respective weights $\la_2$ and $c_1-\la_2$, we have $\la_2 f_1(\vw^*_{\la_2}) + (c_1-\la_2) g_1(\vw^*_{\la_2}) > \la_2 f_1(\vw^*_{\la_1}) + (c_1-\la_2) g_1(\vw^*_{\la_1})$. This contradicts \eqref{eq:2}.
    \item $f_1(\vw^*_{\la_1}) > f_1(\vw^*_{\la_2})$ and $g_1(\vw^*_{\la_1}) < g_1(\vw^*_{\la_2})$: By adding \eqref{eq:1} with $\vw = \vw^*_{\la_2}$ and \eqref{eq:2} with $\vw = \vw^*_{\la_1}$, we have 
    \begin{align*}
    &\la_1 f_1(\vw^*_{\la_1}) + (c_1-\la_1) g_1(\vw^*_{\la_1}) + \la_2 f_1(\vw^*_{\la_2}) + (c_1-\la_2) g_1(\vw^*_{\la_2})\\
    &\leq \la_1 f_1(\vw^*_{\la_2}) + (c_1-\la_1) g_1(\vw^*_{\la_2}) + \la_2 f_1(\vw^*_{\la_1}) + (c_1-\la_2) g_1(\vw^*_{\la_1}).
    \end{align*}
    By rearranging terms, we have
    \begin{align*}
    (\la_1-\la_2) (f_1(\vw^*_{\la_1})-f_1(\vw^*_{\la_2})) \leq (\la_1 - \la_2)(g_1(\vw^*_{\la_1})-g_1(\vw^*_{\la_2})).
    \end{align*}
    By dividing both sides by $\la_1 - \la_2 > 0$, we have $f_1(\vw^*_{\la_1})-f_1(\vw^*_{\la_2}) \leq g_1(\vw^*_{\la_1})-g_1(\vw^*_{\la_2})$.  This contradicts the condition as the left-hand side is positive while the right-hand side is negative.
\end{enumerate}
This completes the proof of the first claim by contradiction.

The second claim immediately follows the first claim.
Since $F(\la) = |f_1(\vw_\la) - g_1(\vw_\la)|$,
we have $\dv{F(\la)}{\la} = \sign{(f_1(\vw_\la) - g_1(\vw_\la))}\dv{\la}(f_1(\vw_\la) - g_1(\vw_\la))$.
As shown in the earlier part of this proof, $f_1(\vw_\la) - g_1(\vw_\la)$ is a nonincreasing function, i.e., $\dv{f_1(\vw_\la)-g_1(\vw_\la)}{\la} \leq 0$.
Thus, $\sign(\dv{F(\la)}{\la}) = \sign(g_1(\vw_\la) - f_1(\vw_\la))$.

\paragraph{Case 2 (If $f_1(\cdot)$, $g_1(\cdot)$, and $h(\cdot)$ are twice differentiable, $\la \nabla^2 f_1(\vw_\la) + (c_1-\la) \nabla^2 g_1(\vw_\la) + \nabla^2 h(\vw_\la) \succ 0$ for every $\la \in [0, c_1]$)}
In this part of the proof, we will denote $\vw_\la$ by $\vw$ for simplicity.
To show that $f_1(\vw) - g_1(\vw)$ is a nondecreasing function (in $\la$), consider the derivative:
\begin{align}
\dv{\la}(f_1(\vw) - g_1(\vw)) = (\nabla f_1(\vw) - \nabla g_1(\vw))^\top \,\dv{\vw}{\la}
\end{align}
To compute $\dv{\vw}{\la}$, we implicitly differentiate (with respect to $\la$) the following stationary equation.
\begin{align*}
    &\la \nabla f_1(\vw) + (c_1-\la) \nabla g_1(\vw) + \nabla h(\vw) = 0\\
\Rightarrow ~&~ \nabla f_1(\vw) + \la \nabla^2 f_1(\vw) \cdot \dv{\vw}{\la}  - \nabla g_1(\vw) + (c_1-\la) \nabla^2 g_1(\vw) \cdot \dv{\vw}{\la} + \nabla^2 h(\vw) \cdot \dv{\vw}{\la} = 0
\end{align*}
By rearranging terms, we have
\begin{align*}
\left(\la \nabla^2 f_1(\vw) + (c_1-\la) \nabla^2 g_1(\vw) + \nabla^2 h(\vw) \right) \dv{\vw}{\la} = -(\nabla f_1(\vw) - \nabla g_1(\vw)).
\end{align*}
By the assumption, $\la \nabla^2 f_1(\vw) + (c_1-\la) \nabla^2 g_1(\vw) + \nabla^2 h(\vw)$ is positive definite and hence invertible. Thus,
\begin{align*}
\dv{\vw}{\la} = -\left(\la \nabla^2 f_1(\vw) + (c_1-\la) \nabla^2 g_1(\vw) + \nabla^2 h(\vw) \right)^{-1}(\nabla f_1(\vw) - \nabla g_1(\vw)).
\end{align*}
Therefore,
\begin{align*}
\dv{\la}(f_1(\vw) - g_1(\vw)) = -(\nabla f_1(\vw) - \nabla g_1(\vw))^\top \left(\la \nabla^2 f_1(\vw) + (c_1-\la) \nabla^2 g_1(\vw) + \nabla^2 h(\vw) \right)^{-1}(\nabla f_1(\vw) - \nabla g_1(\vw)).
\end{align*}
Note that $\left(\la \nabla^2 f_1(\vw) + (c_1-\la) \nabla^2 g_1(\vw) + \nabla^2 h(\vw) \right)^{-1}$ is also positive definite. Thus, $\dv{\la}(f_1(\vw) - g_1(\vw))$ is always negative, and hence $f_1(\vw) - g_1(\vw)$ is a decreasing function.

Now, observe that
\begin{align*}
(f_1(\vw) - g_1(\vw))\cdot \dv{\la}(f_1(\vw) - g_1(\vw)) \in \partial_\la{F(\la)}.
\end{align*}
Therefore, if $F(\cdot) \neq 0$, then $\partial_{\la}{F(\la)} = \{v\}$ and $\sign\left(v\right) = \sign{(g_1(\vw)- f_1(\vw))}$.
\end{proof}

\subsection{Inner Objective's Convexity does not imply Outer Objective's Convexity}\label{sec:a_4_counterexample_to_convexity}
\begin{figure}[h]
    \centering
    \includegraphics[width=0.5\textwidth]{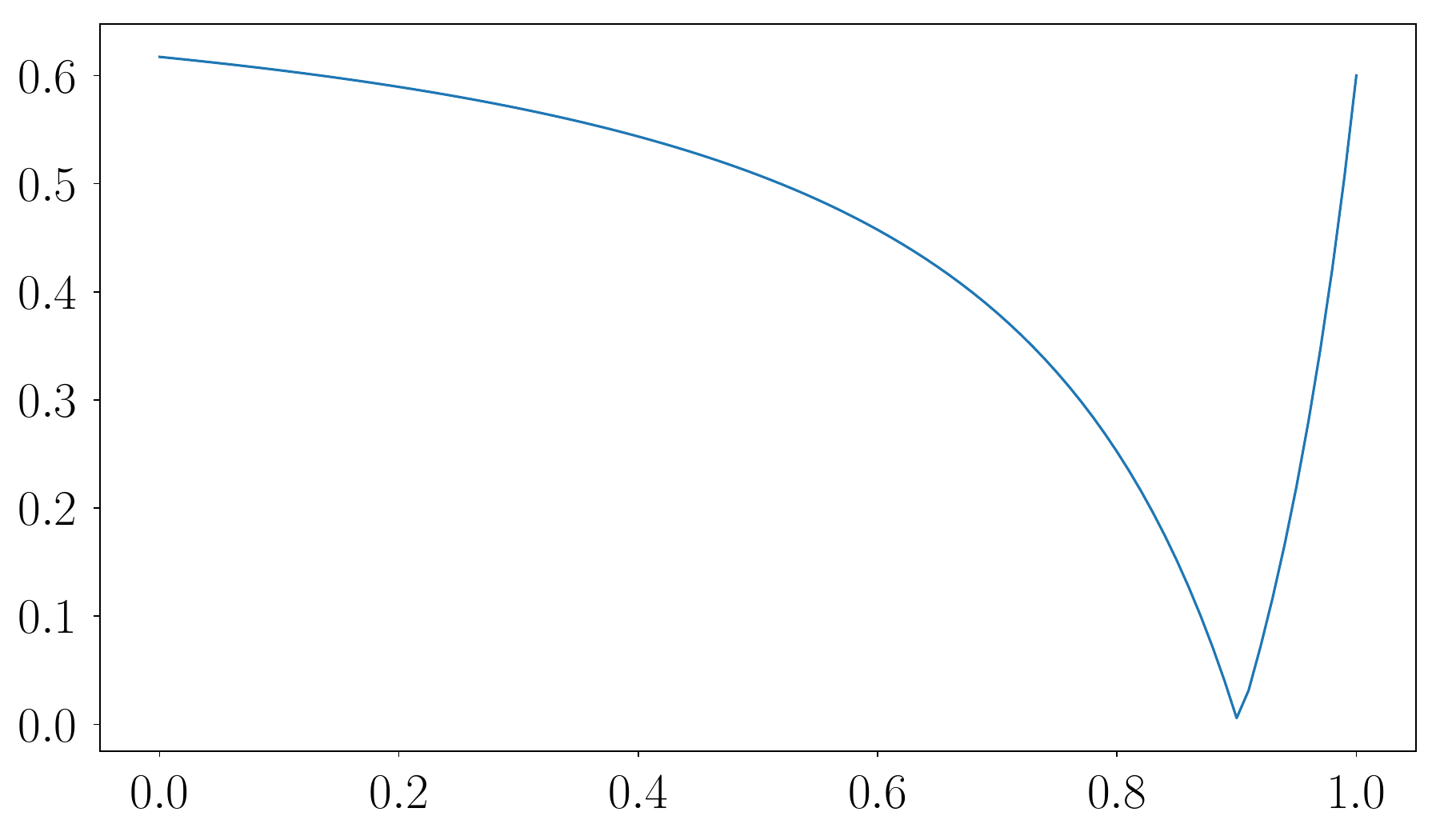}
    \caption{$F(\lambda)$ is not convex, but quasi-convex.}
    \label{fig:flambda}
\end{figure}
We continue from Sec.~\ref{sec:algorithm} and provide an example where inner objective's convexity does not imply outer objective's convexity. 
Consider the following strongly convex functions $f_1(\cdot)$ and $g_1(\cdot)$:
\begin{align*}
    f_1(w) = \frac{e^{w}+e^{-w}}{5}, ~~g_1(w) = (w-1)^2
\end{align*}
Shown in Fig.~\ref{fig:flambda} is the outer objective function $F(\la)$. 
One can observe that it is not convex.
Note that it is quasiconvex by Lemma~\ref{lemma:quasiconvex}.

\subsection{Gradient when $d \geq 1$}\label{sec:a_grad_derivation}
We continue from Sec.~\ref{sec:algorithm} and derive the gradient of the outer objective function.
Recall how we formulated the general bilevel optimization problem:
\begin{align*}
&\min_{\vla \in \Lambda} {\max_{i=1,\ldots, d}\left|f_i(\vw_\vla) - g_i(\vw_\vla)\right|}, &\vw_\vla= \argmin_{\vw} {\textstyle\sum}_{i=1}^{d}\left[ \la_i f_i(\vw) + (c_i - \la_i) g_i(\vw)\right] + h(\vw).\nonumber
\end{align*}
In this section, we will prove the following:
\begin{align*}
\sign{\left(g_{i^*}(\vw)-f_{i^*}(\vw)\right)}(\nabla f_{i^*}(\vw) - \nabla g_{i^*}(\vw))^\top \mH_{\vla}^{-1} (\nabla f_i(\vw) - \nabla g_i(\vw)) \in \partial_{\la_i}{F(\vla)}, ~\forall i.
\end{align*}

Assume that $\sum_{i=1}^{d} [\la_i \nabla^2 f_i(\vw_\vla) + (c_i-\la_i) \nabla^2 g_i(\vw_\vla)] + \nabla^2 h(\vw_\vla) \succ 0$ for every $\vla \in \Lambda$.
In this part of the proof, we will denote $\vw_\la$ by $\vw$ for simplicity.

To compute $\dv{\vw}{\la_i}$, we implicitly differentiate (with respect to $\la_i$) the following stationary equation.
\begin{align*}
    &\sum_{j=1}^{d}[\la_j \nabla f_j(\vw) + (c_j-\la_j) \nabla g_j(\vw)] + \nabla h(\vw) = 0\\
\Rightarrow ~&~ \nabla f_i(\vw) + \la_i \nabla^2 f_i(\vw) \cdot \pdv{\vw}{\la_i} - \nabla g_i(\vw) + (c_i-\la_i) \nabla^2 g_i(\vw) \cdot \pdv{\vw}{\la_i}\\
&+\sum_{1\leq j \leq d,~ j\neq i}\left[\la_j \nabla^2 f_j(\vw) \cdot \pdv{\vw}{\la_i}  + (c_j-\la_j) \nabla^2 g_j(\vw) \cdot \pdv{\vw}{\la_i}\right] + \nabla^2 h(\vw) \cdot \pdv{\vw}{\la_i} = 0
\end{align*}
By rearranging terms, we have
\begin{align*}
\left(\sum_{j=1}^{d}\left[\la_j \nabla^2 f_j(\vw) + (c_j-\la_j) \nabla^2 g_j(\vw)\right] + \nabla^2 h(\vw) \right) \pdv{\vw}{\la_i} = -(\nabla f_i(\vw) - \nabla g_i(\vw)).
\end{align*}
By the assumption, $\mH_\vla := \sum_{j=1}^{d}\left[\la_j \nabla^2 f_j(\vw) + (c_j-\la_j) \nabla^2 g_j(\vw)\right] + \nabla^2 h(\vw)$ is positive definite and hence invertible. Thus,
\begin{align*}
\pdv{\vw}{\la_i} = -\mH_\vla^{-1}(\nabla f_i(\vw) - \nabla g_i(\vw)).
\end{align*}
Now observe that $F(\vla) = |f_{i^*}(\vw_\vla) - g_{i^*}(\vw_\vla)|$.
Therefore,
\begin{align*}
\sign{(f_{i^*}(\vw) - g_{i^*}(\vw))} \pdv{\la_i}(f_{i^*}(\vw) - g_{i^*}(\vw)) \in \partial_{\la_i}{F(\vla)}.
\end{align*}
Since
\begin{align*}
\pdv{\la_i}(f_{i^*}(\vw) - g_{i^*}(\vw)) = -(\nabla f_{i^*}(\vw) - \nabla g_{i^*}(\vw))^\top \mH_\vla^{-1}(\nabla f_{i}(\vw) - \nabla g_{i}(\vw)),
\end{align*}
we have 
\begin{align}
 -\sign{(f_{i^*}(\vw) - g_{i^*}(\vw))}(\nabla f_{i^*}(\vw) - \nabla g_{i^*}(\vw))^\top \mH_\vla^{-1}(\nabla f_i(\vw) - \nabla g_i(\vw)) \in \partial_{\la_i}{F(\vla)}.\label{eq:grads}
\end{align}

\subsection{Rationale and Intuition behind the Approximation}\label{sec:a_approx_rationale}
We continue from Sec.~\ref{sec:algorithm} and provide more justifications for the gradient approximation technique. 
Assume that $\sum_{i=1}^{d} [\la_i \nabla^2 f_i(\vw_\vla) + (c_i-\la_i) \nabla^2 g_i(\vw_\vla)] + \nabla^2 h(\vw_\vla) \succ 0$ for every $\vla \in \Lambda$.
Then, the gradient can be fully characterized as in \eqref{eq:grads}.

The rationale behind the approximation
$\vgamma \approx (0, 0, \ldots, \gamma_{i^*}, \ldots, 0)$ is that $|\gamma_{i^*}|$ will be maximized at $i^*$ if $\left\|\nabla f_{1}(\vw) - \nabla g_{1}(\vw)\right\|\approx \left\|\nabla f_{2}(\vw) - \nabla g_{2}(\vw)\right\|\approx \cdots \approx \left\|\nabla f_{d}(\vw) - \nabla g_{d}(\vw)\right\|$.
This is because $(\nabla f_{i^*}(\vw) - \nabla g_{i^*}(\vw))^\top \mH_{\vla}^{-1} (\nabla f_i(\vw) - \nabla g_i(\vw))$ is an inner product between $\mH^{-1/2}_{\vla} (\nabla f_{i^*}(\vw) - \nabla g_{i^*}(\vw))$ and $\mH^{-1/2}_{\vla} (\nabla f_{i}(\vw) - \nabla g_{i}(\vw))$, and they are always perfectly aligned when $i=i^*$.

This approximation is also intuitive.
Recall that changing $\la_{i^*}$ affects the weights associated with $f_{i^*}(\vw)$ and $g_{i^*}(\vw)$ in the inner optimization problem.
Thus, changes in $\la_{i^*}$ will directly affect $F(\vla) = |f_{i^*}(\vw)-g_{i^*}(\vw)|$.
On the other hand, changing $\la_{i}$ for $i \neq i^*$ does not affect the weights associated with $f_{i^*}(\vw)$ and $g_{i^*}(\vw)$ but only affects the weights of other terms, so it will only indirectly and weakly affect $F(\vla)$.

\subsection{FairBatch Algorithms in Pseudocode}
\label{appendix:algorithms}

We continue from Sec.~\ref{sec:algorithm} and present the \fb{} algorithms in pseudocode.
Algorithms~\ref{alg:fairbatch_eo},~\ref{alg:fairbatch_ed}, and~\ref{alg:fairbatch_dp} show how $\vla$ is adjusted for equal opportunity, equalized odds, and demographic parity, respectively. From the intermediate model at each epoch (or after a certain iterations), we first obtain $f(\vw)$ and $g(\vw)$, which correspond to the losses conditioned on each class. Then, one can update the current value of $\vla$ by comparing $f(\vw)$ and $g(\vw)$.

\begin{algorithm}[h]
\DontPrintSemicolon
\setstretch{1.0}
    \SetKwInput{Input}{Input}
    \SetKwInOut{Output}{Output}
    \SetNoFillComment
    \Input{Intermediate model, criterion, train data ($x_{train}$, $z_{train}$, $y_{train}$), 
    \\ \hskip3em previous lambda $\lambda^{(t-1)}$, and \fb{}'s learning rate $\alpha$}
    
    output = model ($x_{train}$)  

    $loss$ = criterion (output, $y_{train}$)\\ 
    \vspace{-0.2cm}
    \vspace{-0.3cm}
    \begin{flalign*}
        &\text{$\lambda^{(t)} = $}
            \begin{cases}
            \lambda^{(t-1)} + \alpha, & \text{if mean$(loss[(\ry=1,\rz=0)])$ $>$ mean$(loss[(\ry=1,\rz=1)])$}\\
            \lambda^{(t-1)} - \alpha, & \text{if mean$(loss[(\ry=1,\rz=0)])$ $<$ mean$(loss[(\ry=1,\rz=1)])$}\\
            \lambda^{(t-1)}, & \text{otherwise}
            \end{cases}&
            \end{flalign*}
    \vspace{-0.3cm}

    \Output{Next lambda $\lambda^{(t)}$}
    \caption{Adaptive adjustment of $\lambda$ w.r.t.\@ equal opportunity.}
    \label{alg:fairbatch_eo}
\end{algorithm}

\begin{algorithm}[h]
\DontPrintSemicolon
\setstretch{1.1}
    \SetKwInput{Input}{Input}
    \SetKwInOut{Output}{Output}
    \SetNoFillComment
    \Input{Intermediate model, criterion, train data ($x_{train}$, $z_{train}$, $y_{train}$), 
    \\ \hskip3em previous lambda $\vla^{(t-1)}$, and \fb{}'s learning rate $\alpha$}
    
    output = model ($x_{train}$)  

    $loss$ = criterion (output, $y_{train}$)\\ 
    $d_{\ry=0}$ = mean$(loss[(\ry=0,\rz=0)])$ $-$ mean$(loss[(\ry=0,\rz=1)])$\\
    $d_{\ry=1}$ = mean$(loss[(\ry=1,\rz=0)])$ $-$ mean$(loss[(\ry=1,\rz=1)])$\\
    
    \vspace{0.1cm}
    \textbf{if} $|d_{\ry=0}|$ $>$ $|d_{\ry=1}|$ \textbf{then}\\
    \vspace{-0.2cm}
    \vspace{-0.3cm}
    \begin{flalign*}
        \hspace{0.2cm}
        &\text{$\lambda^{(t)}_{1} = $}
            \begin{cases}
            \lambda^{(t-1)}_{1} + \alpha, & \text{if $d_{\ry=0} > 0$}\\
            \lambda^{(t-1)}_{1} - \alpha, & \text{if $d_{\ry=0} < 0$}\\
            \lambda^{(t-1)}_{1}, & \text{otherwise}
            \end{cases}&
    \end{flalign*}
    \vspace{-0.2cm}
    \textbf{else}
    \vspace{-0.1cm}
    \begin{flalign*}
        \hspace{0.4cm}
        &\text{$\lambda^{(t)}_{2} = $}
            \begin{cases}
            \lambda^{(t-1)}_{2} + \alpha, & \text{if $d_{\ry=1} > 0$}\\
            \lambda^{(t-1)}_{2} - \alpha, & \text{if $d_{\ry=1} < 0$}\\
            \lambda^{(t-1)}_{2}, & \text{otherwise}
            \end{cases}&
    \end{flalign*}

    \Output{Next lambda $\vla^{(t)}$}
    \caption{Adaptive adjustment of $\vla$ w.r.t.\@ equalized odds.}
    \label{alg:fairbatch_ed}
\end{algorithm}

\begin{algorithm}[h]
\setstretch{1.1}
    \SetKwInput{Input}{Input}
    \SetKwInOut{Output}{Output}
    \SetNoFillComment
    \Input{Intermediate model, criterion, train data ($x_{train}$, $z_{train}$, $y_{train}$), 
    \\ \hskip3em previous lambda $\vla^{(t-1)}$, and \fb{}'s learning rate $\alpha$}
    
    output = model ($x_{train}$)\\

    $loss$ = criterion (output, $\mathbf{1}$)
    
    $d_{\ry=0}$ = sum$(loss[(\ry=0,\rz=0)])/$len$(\rz=0)$ $-$ sum$(loss[(\ry=0,\rz=1)])/$len$(\rz=1)$\\
    $d_{\ry=1}$ = sum$(loss[(\ry=1,\rz=0)])/$len$(\rz=0)$ $-$ sum$(loss[(\ry=1,\rz=1)])/$len$(\rz=1)$\\
    
    \vspace{0.1cm}
    \textbf{if} $|d_{\ry=0}|$ $>$ $|d_{\ry=1}|$ \textbf{then}\\
    \vspace{-0.2cm}
    \vspace{-0.3cm}
    \begin{flalign*}
        \hspace{+0.2cm}
        &\text{$\lambda^{(t)}_{1} = $}
            \begin{cases}
            \lambda^{(t-1)}_{1} - \alpha, & \text{if $d_{\ry=0} > 0$}\\
            \lambda^{(t-1)}_{1} + \alpha, & \text{if $d_{\ry=0} < 0$}\\
            \lambda^{(t-1)}_{1}, & \text{otherwise}
            \end{cases}&
    \end{flalign*}
    \vspace{-0.2cm}
    \textbf{else}
    \vspace{-0.1cm}
    \begin{flalign*}
        \hspace{.4cm}
        &\text{$\lambda^{(t)}_{2} = $}
            \begin{cases}
            \lambda^{(t-1)}_{2} + \alpha, & \text{if $d_{\ry=1} > 0$}\\
            \lambda^{(t-1)}_{2} - \alpha, & \text{if $d_{\ry=1} < 0$}\\
            \lambda^{(t-1)}_{2}, & \text{otherwise}
            \end{cases}&
    \end{flalign*}
    
    \Output{Next lambda $\vla^{(t)}$}
    \caption{Adaptive adjustment of $\vla$ w.r.t.\@ demographic parity.}
    \label{alg:fairbatch_dp}
\end{algorithm}

\section{Appendix -- Experiments}

\subsection{Other Experimental Settings}
\label{appendix:othersettings}

We continue from Sec.~\ref{sec:experiments} and provide more details on experimental settings.
We use the Adam optimizer for all trainings.
We perform cross-validation on the training sets to find the best hyper-parameters for each algorithm. 
We evaluate models on separate test sets, and the ratios of the train versus test data for the synthetic and real datasets are 2:1 and 4:1, respectively.

\subsection{Equalized Odds Results}
\label{appendix:equalizedodds}

We continue from Sec.~\ref{sec:performances} and show Table~\ref{tbl:synthetic_real_eqodds}, which compares the performances of all the fair training techniques on the synthetic, COMPAS, and AdultCensus test sets w.r.t.\@ equalized odds. 
The key observations are the same as in Table~\ref{tbl:synthetic_real_eqopp} where overall \fb{} has the most robust performance against the state of the arts w.r.t.\@ accuracy, fairness, and runtime.

\begin{table}[h]
  \caption{Performances on the synthetic, COMPAS, and AdultCensus test sets w.r.t.\@ equalized odds (ED). The other settings are identical to Table~\ref{tbl:synthetic_real_eqopp}.}
  \label{tbl:synthetic_real_eqodds}
  \vspace{0.3cm}
  \begin{adjustwidth}{-0.2cm}{}
  \centering
\scalebox{0.95}{
  \begin{tabular}{l@{\hspace{3pt}}c@{\hspace{3pt}}c@{\hspace{3pt}}c@{\hspace{3pt}}c@{\hspace{3pt}}c@{\hspace{3pt}}c@{\hspace{3pt}}c@{\hspace{3pt}}c@{\hspace{3pt}}c@{\hspace{4pt}}}
    \toprule
      & \multicolumn{3}{c}{Synthetic} & \multicolumn{3}{c}{COMPAS} & \multicolumn{3}{c}{AdultCensus} \\
    \cmidrule(r){1-10}
      Method & Acc. & ED Disp. & Epochs & Acc. & ED Disp. & Epochs & Acc. & ED Disp. & Epochs \\
    \midrule
    LR & .885$\pm$.000 & .115$\pm$.000 & 400 & .681$\pm$.002 & .239$\pm$.006 & 300 & .845$\pm$.001 & .056$\pm$.003 & 300\\
    \cmidrule(l){1-10}
    Cutting & .859$\pm$.001 & .036$\pm$.004 & 650 & .665$\pm$.005 & .066$\pm$.018 & 400 & .802$\pm$.001 & .062$\pm$.005 & 600\\
    RW & .856$\pm$.000 & .038$\pm$.002 & 350 & .685$\pm$.000 & .137$\pm$.000 & 300 & .835$\pm$.001 & .134$\pm$.006 & 100\\
    LBC & .858$\pm$.001 & \textbf{.026$\pm$.000} & 8800 & .673$\pm$.002 & \underline{.063$\pm$.005} & 9000 & .840$\pm$.002 & \textbf{.027$\pm$.004} & 3300\\
    \cmidrule(l){1-10}
    FC & .865$\pm$.000 & .030$\pm$.001 & 800 & .677$\pm$.004 & .101$\pm$.024 & 50 & .841$\pm$.001 & .038$\pm$.003 & 300\\
    AD & .857$\pm$.000 & .030$\pm$.001 & 1200 & .683$\pm$.000 & .082$\pm$.027 & 450 & .843$\pm$.002 & \underline{.033$\pm$.002} & 500\\
    AdaFair & .868$\pm$.001 & \underline{.029$\pm$.002} & 22400 & .675$\pm$.000 & .066$\pm$.002 & 9600 & .843$\pm$.001 & .038$\pm$.002 & 7800\\
    \cmidrule(l){1-10}
    \textbf{\fb{}} & .856$\pm$.001 & .038$\pm$.002 & 400 & .682$\pm$.001 & \textbf{.052$\pm$.014} & 100 & .843$\pm$.001 & .036$\pm$.002 & 500\\ 

    \bottomrule
  \end{tabular}
  }
  \end{adjustwidth}
\end{table}

\subsection{Extension of AdaFair}
\label{appendix:adafairextension}

We continue from Sec.~\ref{sec:experiments} and provide more details on how we extend AdaFair, which already supports ED, to also support EO and DP.
The extension to EO is straightforward as EO is a relaxed version of ED where only the $\ry = 1$ class is considered when measuring disparity.
Hence, we only reweight examples in the $\ry = 1$ class as well. 
The extension to DP is done by giving more weights on the positive examples of a certain sensitive group $\rz = z$ that suffers from a lower positive prediction rate than other groups.

\subsection{Fairness Curves}
\label{appendix:fairnesscurve}

We continue from Sec.~\ref{sec:performances} and show in Figures~\ref{fig:algorithm_curves_eo} and \ref{fig:algorithm_curves_dp} the EO and DP disparity curves against the number of epochs for each fairness technique on the synthetic dataset.
We also directly compare the curves of all fairness techniques in one graph as shown in Figure~\ref{fig:allalgorithms}. Since LBC and AdaFair require more than 10x many epochs than other methods, we only show their first 1000 epochs.
As a result, \fb{} is one of the fastest methods to converge to low EO or DP disparities.

\begin{figure*}[h]
\centering
\captionsetup[subfigure]{justification=centering}
\begin{subfigure}{0.23\textwidth}
\hspace{-0.3cm}
\includegraphics[scale=0.15]{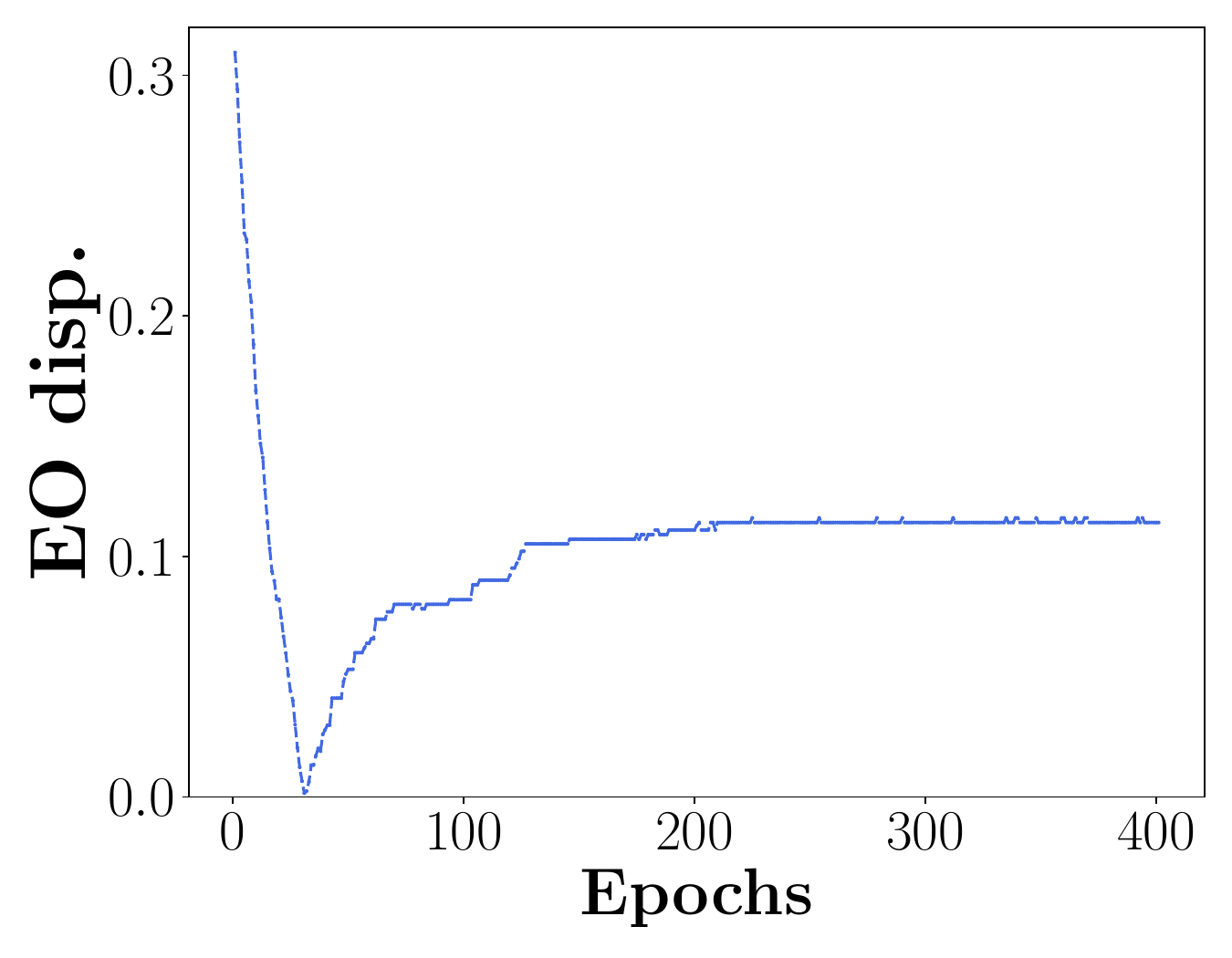}\vspace{-0.2cm}
\caption{LR}
\label{fig:eo_lr}
\end{subfigure}
\begin{subfigure}{0.23\textwidth}
\includegraphics[scale=0.15]{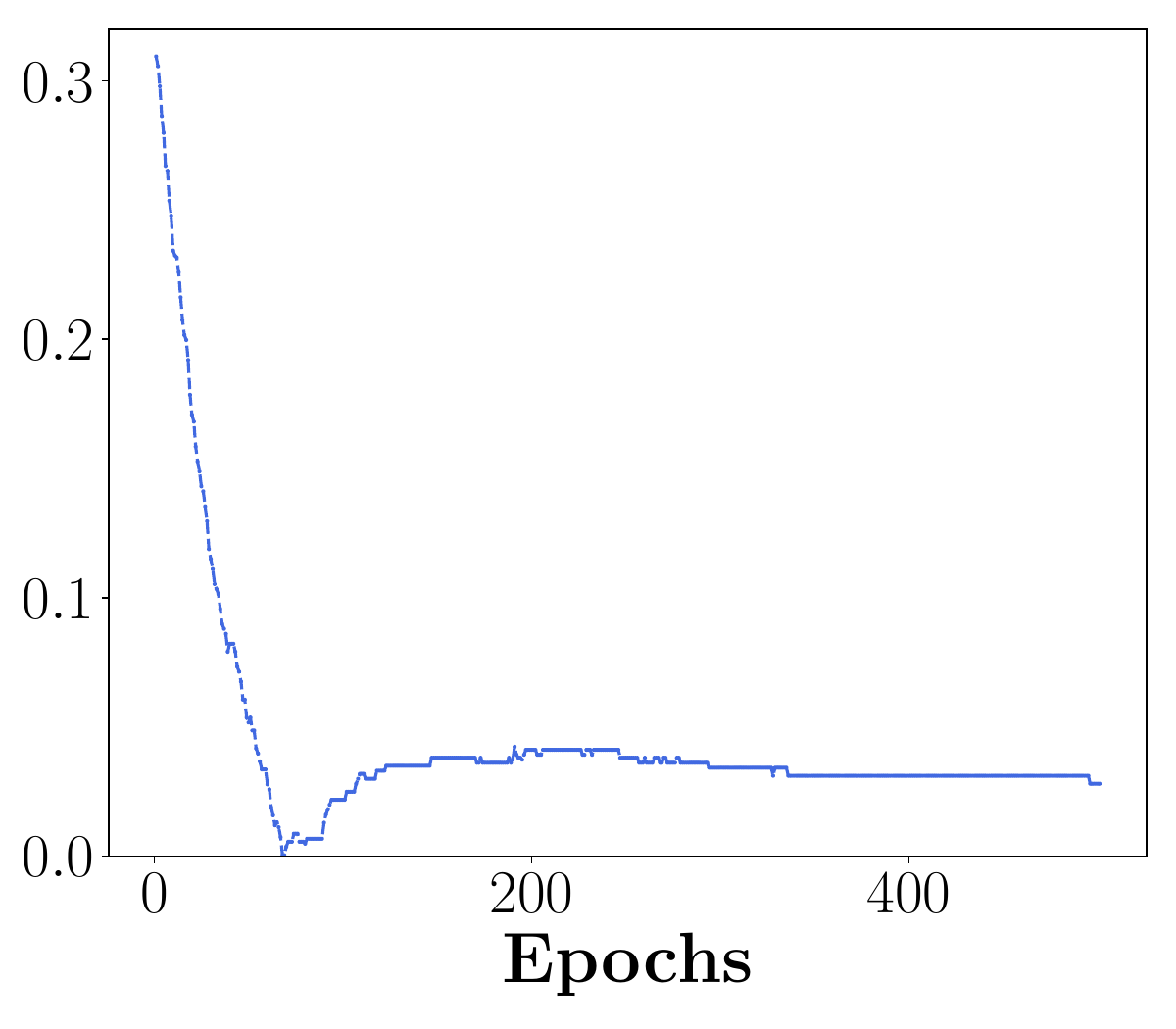}\vspace{-0.2cm}
\caption{Cutting}
\label{fig:eo_cutting}
\end{subfigure}
\begin{subfigure}{0.23\textwidth}
\includegraphics[scale=0.15]{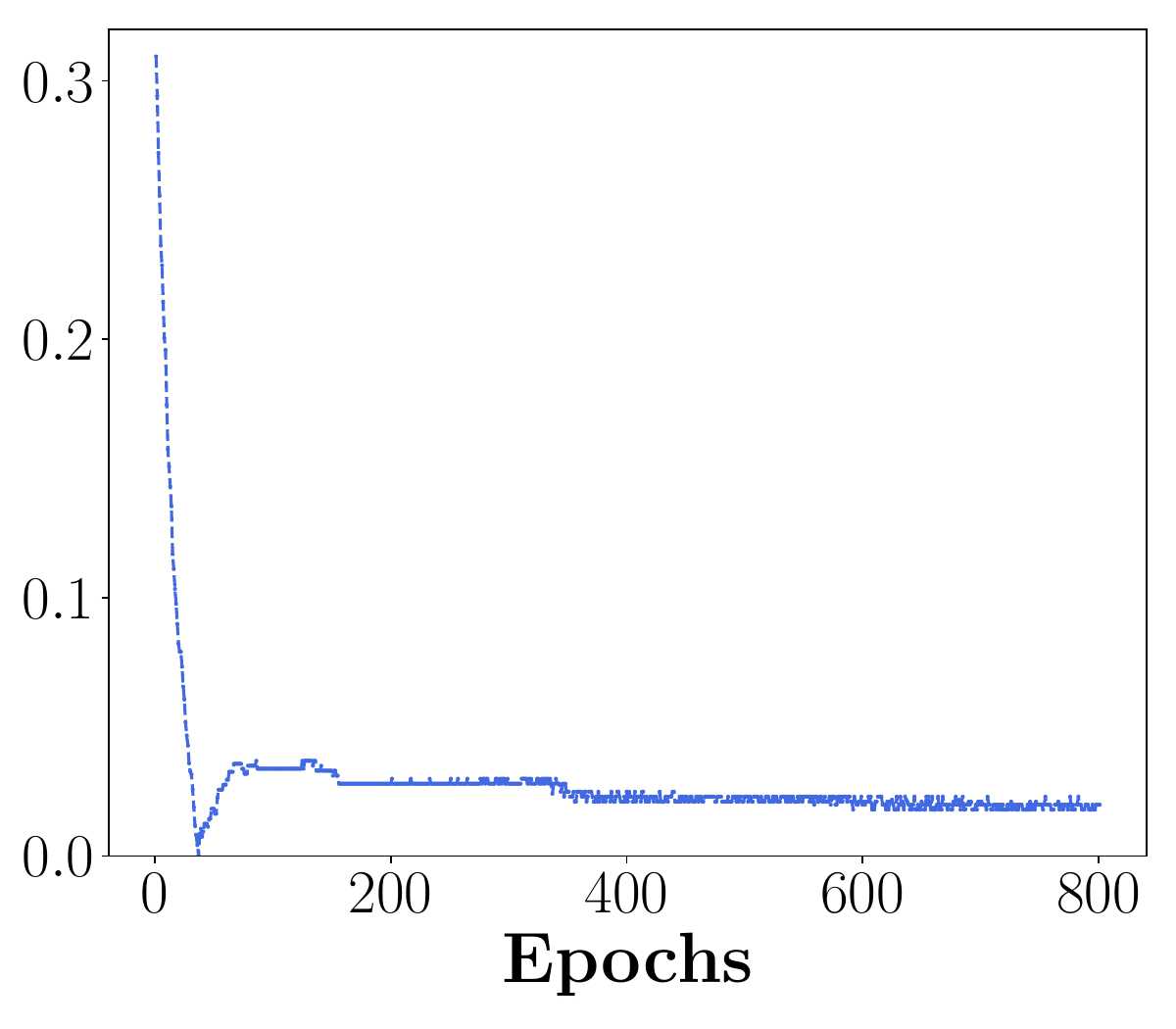}\vspace{-0.2cm}
\caption{RW}
\label{fig:eo_reweighing}
\end{subfigure}
\begin{subfigure}{0.23\textwidth}
\includegraphics[scale=0.15]{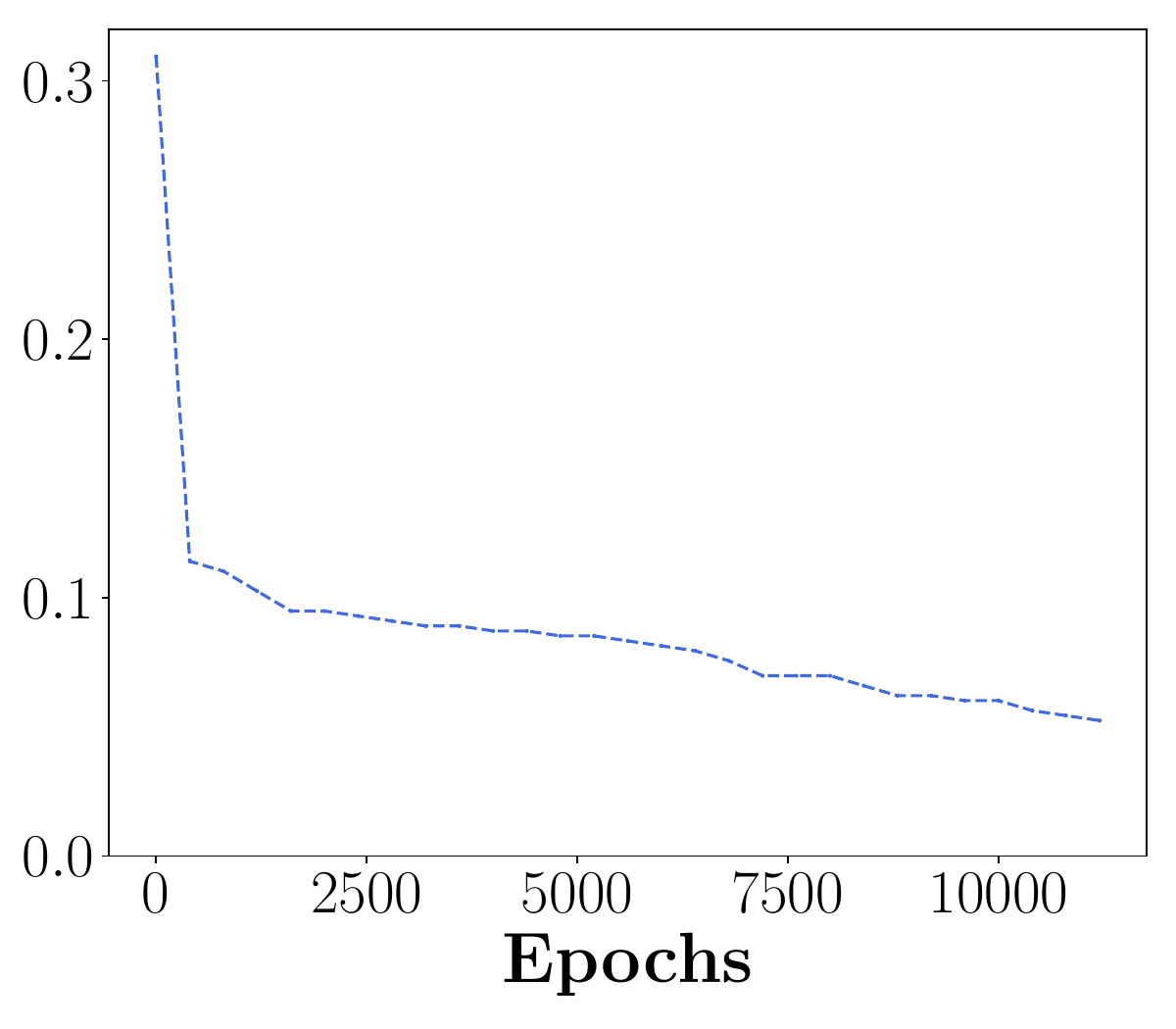}\vspace{-0.2cm}
\caption{LBC}
\label{fig:eo_lbc}
\end{subfigure}
\begin{subfigure}{0.23\textwidth}
\hspace{-0.3cm}
\includegraphics[scale=0.15]{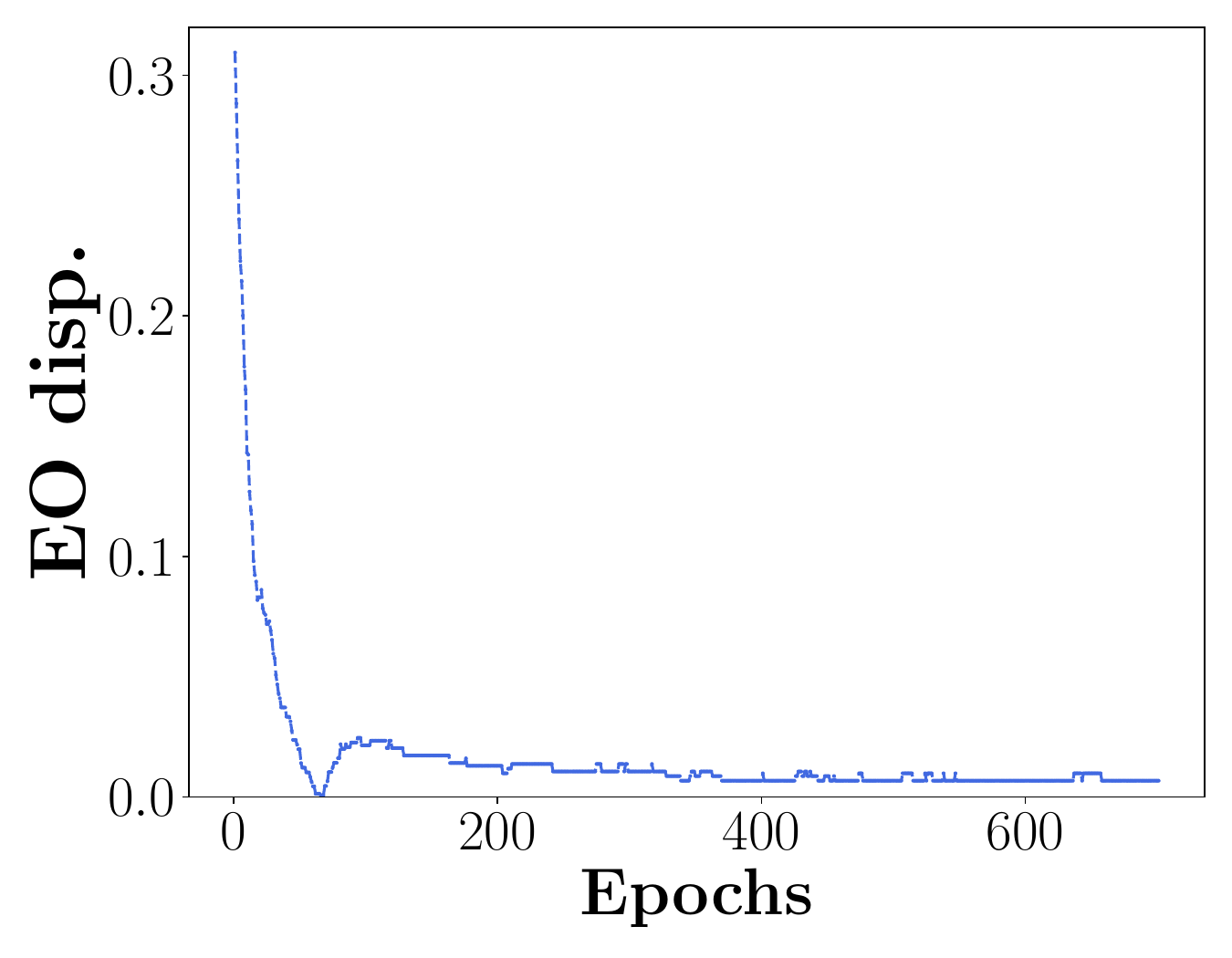}\vspace{-0.2cm}
\caption{FC}
\label{fig:eo_fc}
\end{subfigure}
\begin{subfigure}{0.23\textwidth}
\includegraphics[scale=0.15]{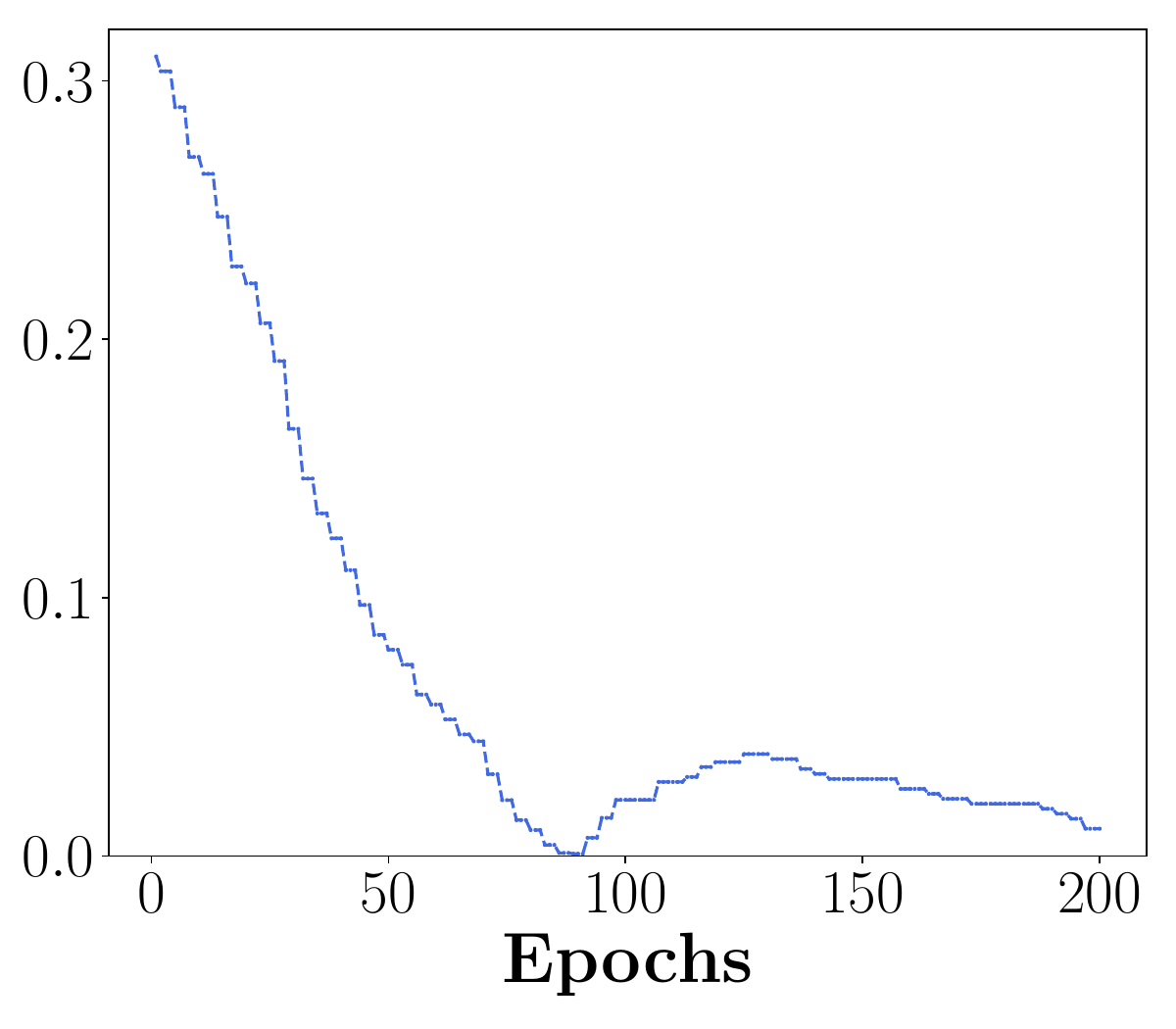}\vspace{-0.2cm}
\caption{AD}
\label{fig:eo_ad}
\end{subfigure}
\begin{subfigure}{0.23\textwidth}
\includegraphics[scale=0.15]{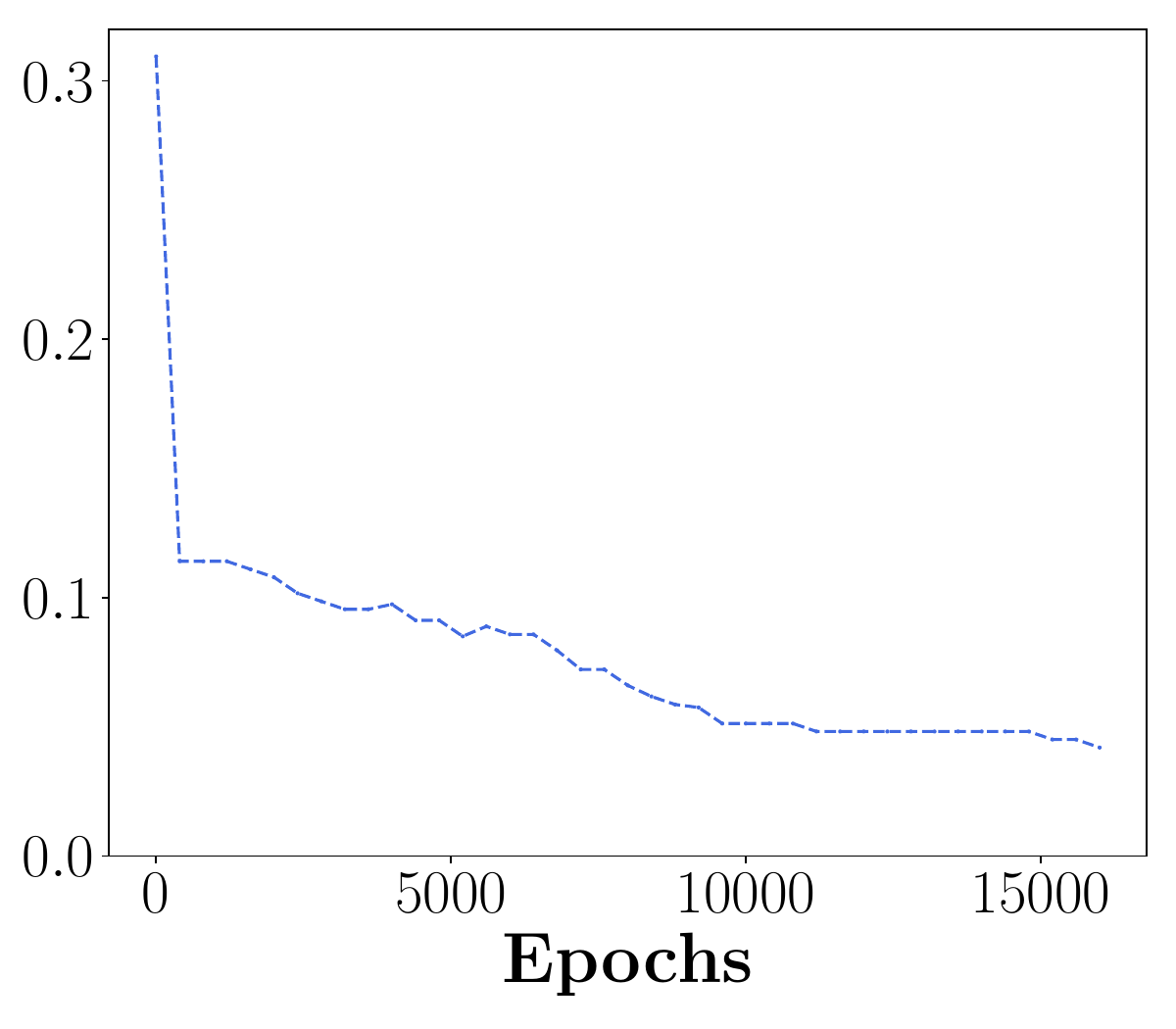}\vspace{-0.2cm}
\caption{AdaFair}
\label{fig:eo_adafair}
\end{subfigure}
\begin{subfigure}{0.23\textwidth}
\includegraphics[scale=0.15]{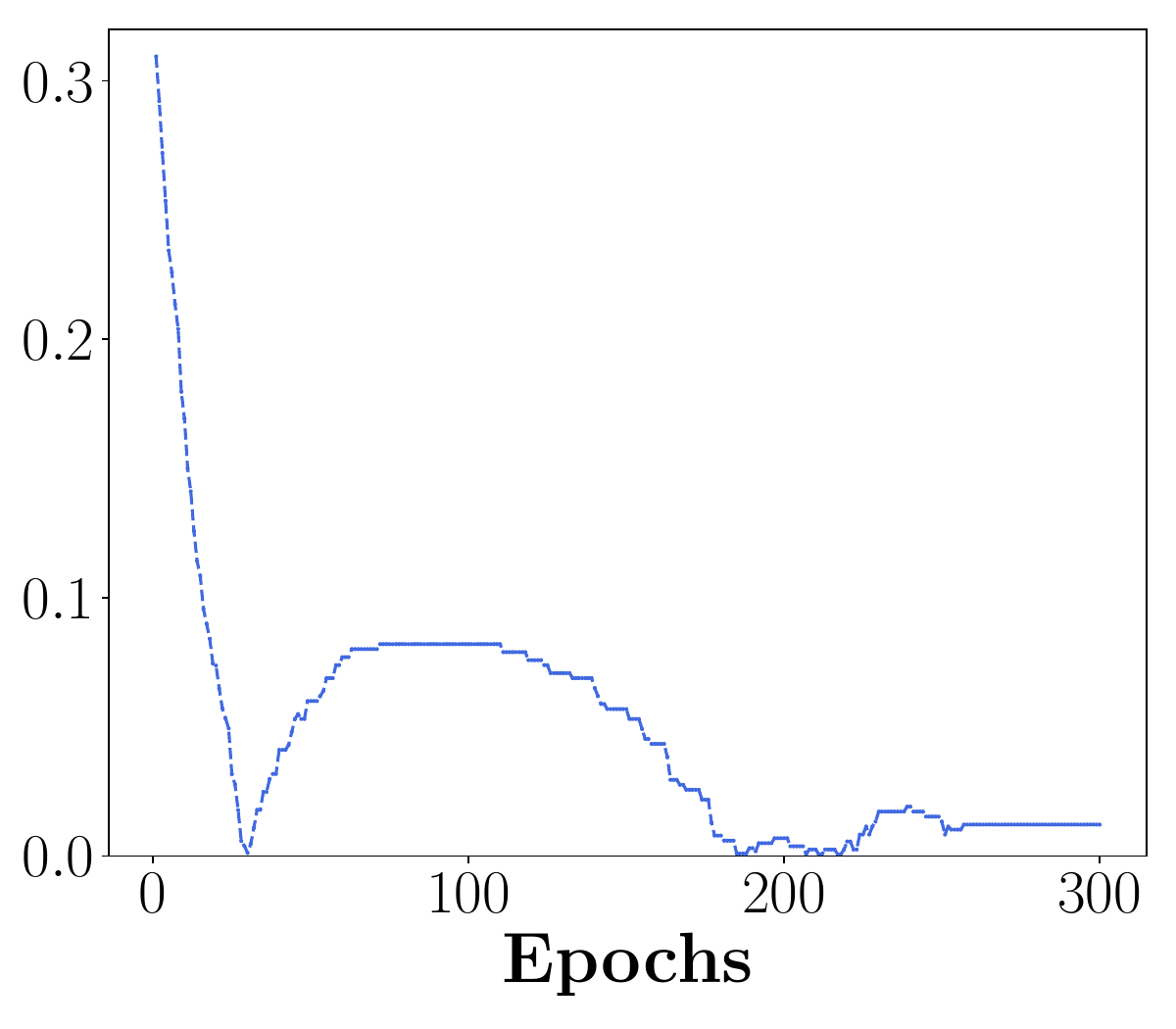}\vspace{-0.2cm}
\caption{\fb{}}
\label{fig:eo_fairbatch}
\end{subfigure}
\vspace{-0.2cm}
\caption{EO disparity curves of algorithms on the synthetic dataset.}
\vspace{-0.2cm}
\label{fig:algorithm_curves_eo}
\end{figure*}

\begin{figure*}[h]
\centering
\captionsetup[subfigure]{justification=centering}
\begin{subfigure}{0.23\textwidth}
\hspace{-0.3cm}
\includegraphics[scale=0.15]{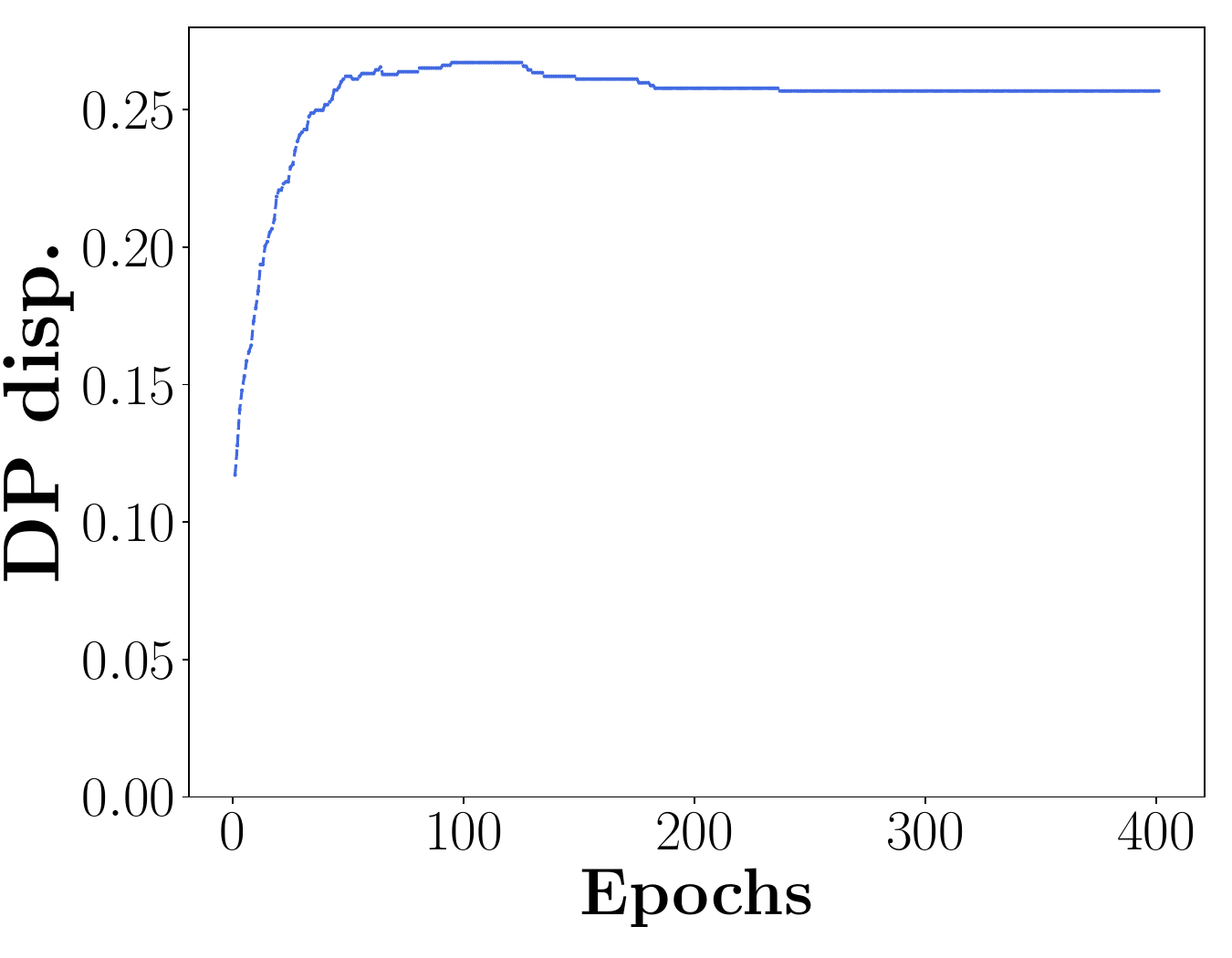}\vspace{-0.2cm}
\caption{LR}
\label{fig:dp_lr}
\end{subfigure}
\begin{subfigure}{0.23\textwidth}
\includegraphics[scale=0.15]{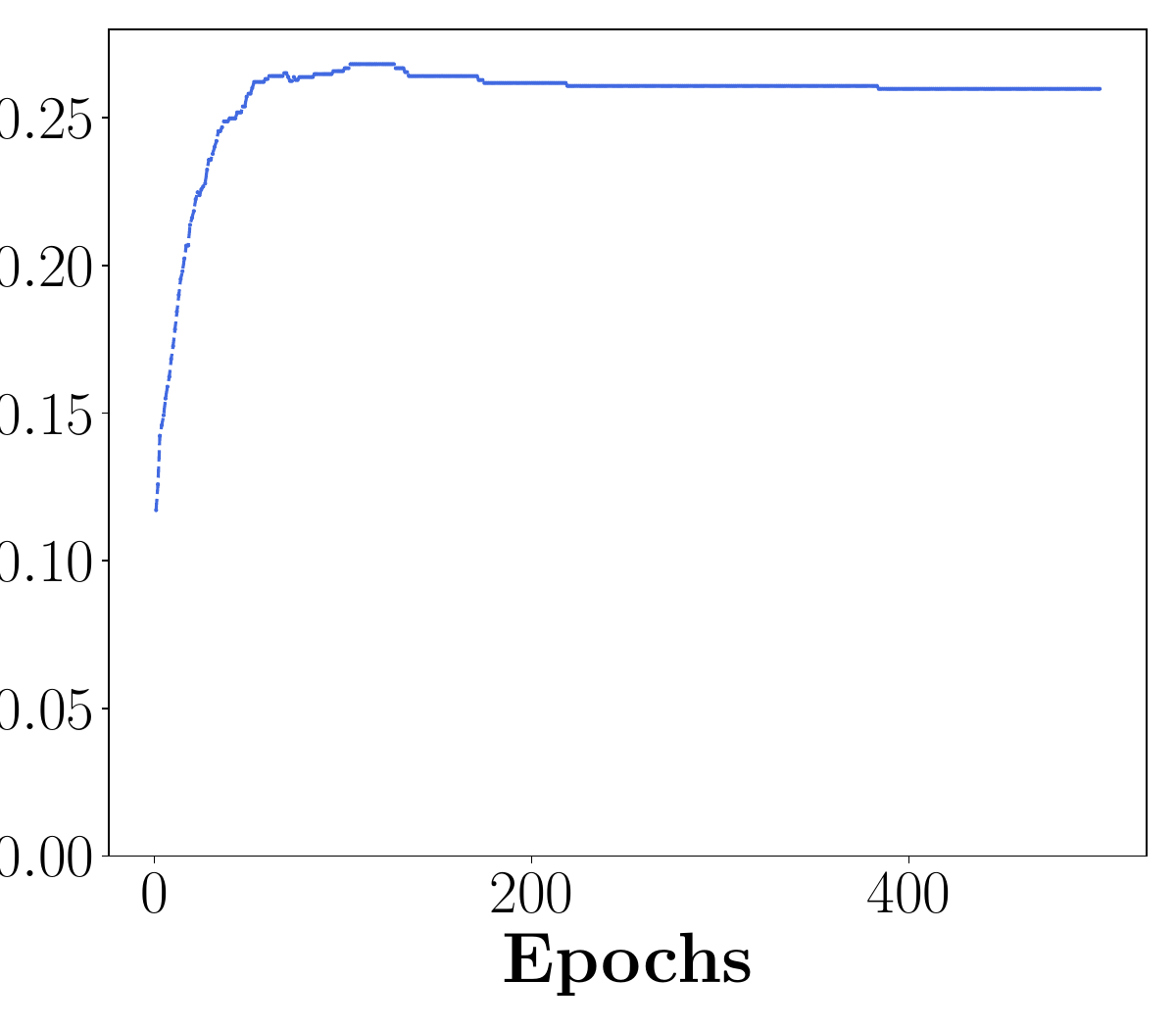}\vspace{-0.2cm}
\caption{Cutting}
\label{fig:dp_cutting}
\end{subfigure}
\begin{subfigure}{0.23\textwidth}
\includegraphics[scale=0.15]{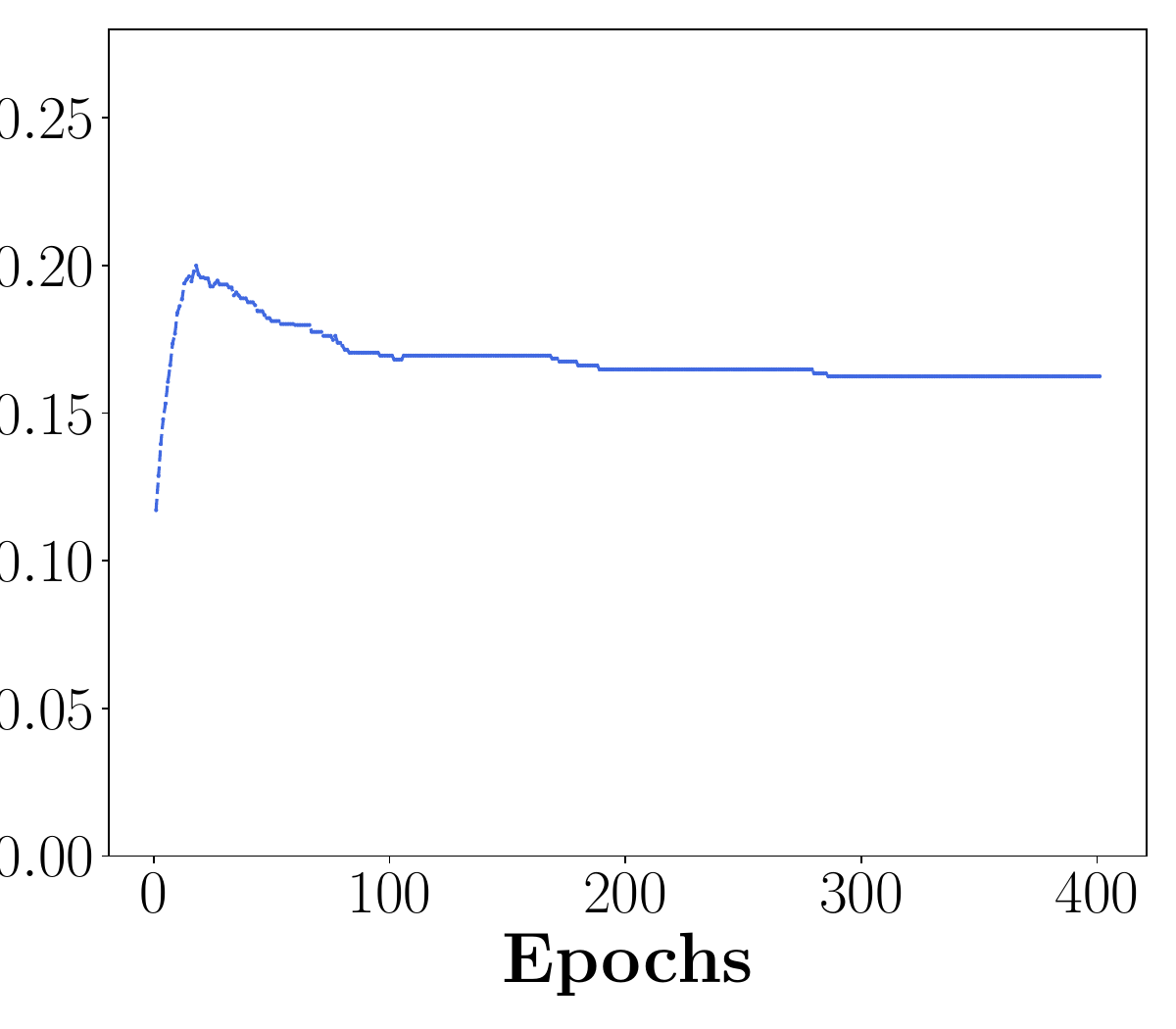}\vspace{-0.2cm}
\caption{RW}
\label{fig:dp_reweighing}
\end{subfigure}
\begin{subfigure}{0.23\textwidth}
\includegraphics[scale=0.15]{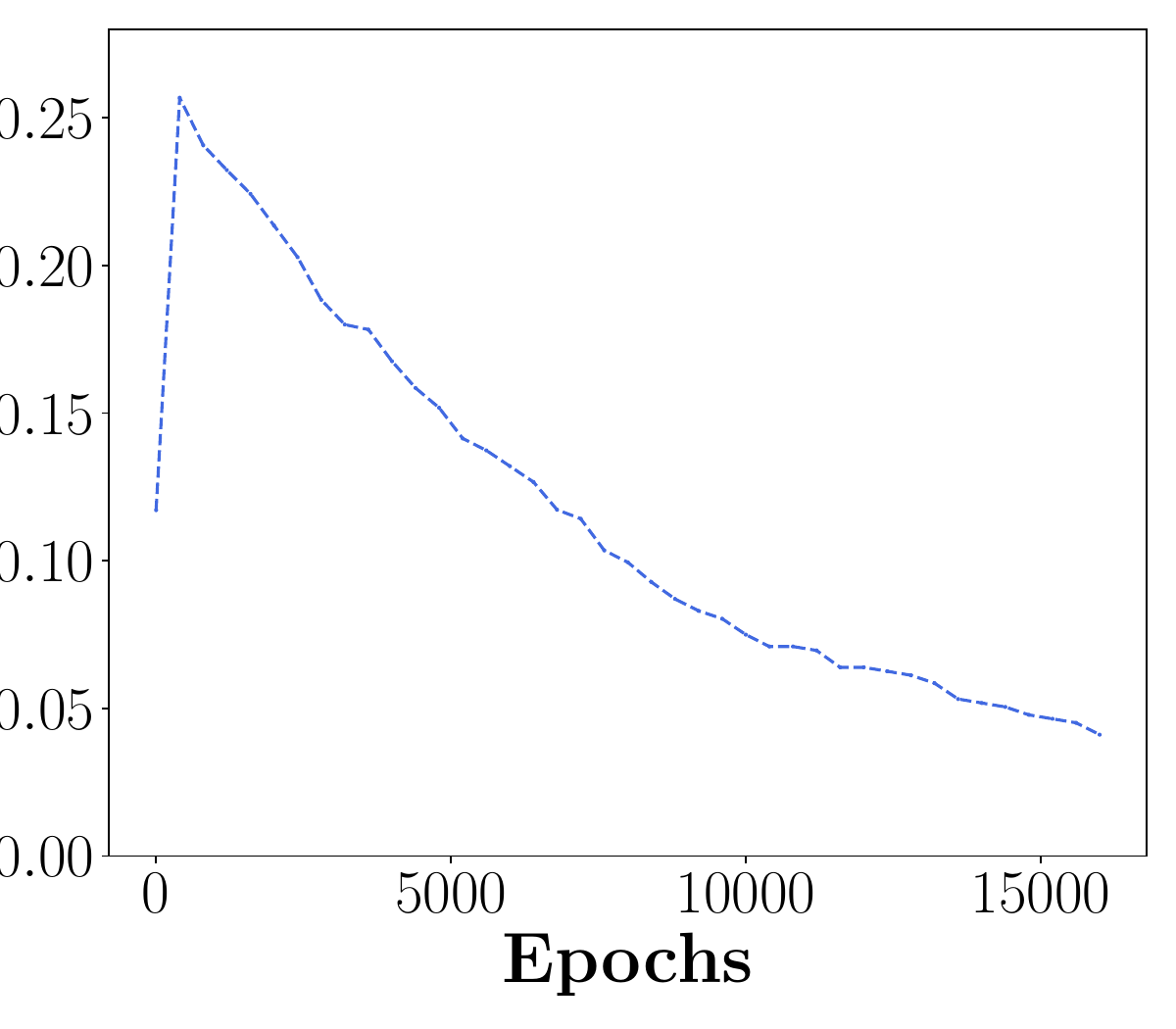}\vspace{-0.2cm}
\caption{LBC}
\label{fig:dp_lbc}
\end{subfigure}
\begin{subfigure}{0.23\textwidth}
\hspace{-0.3cm}
\includegraphics[scale=0.15]{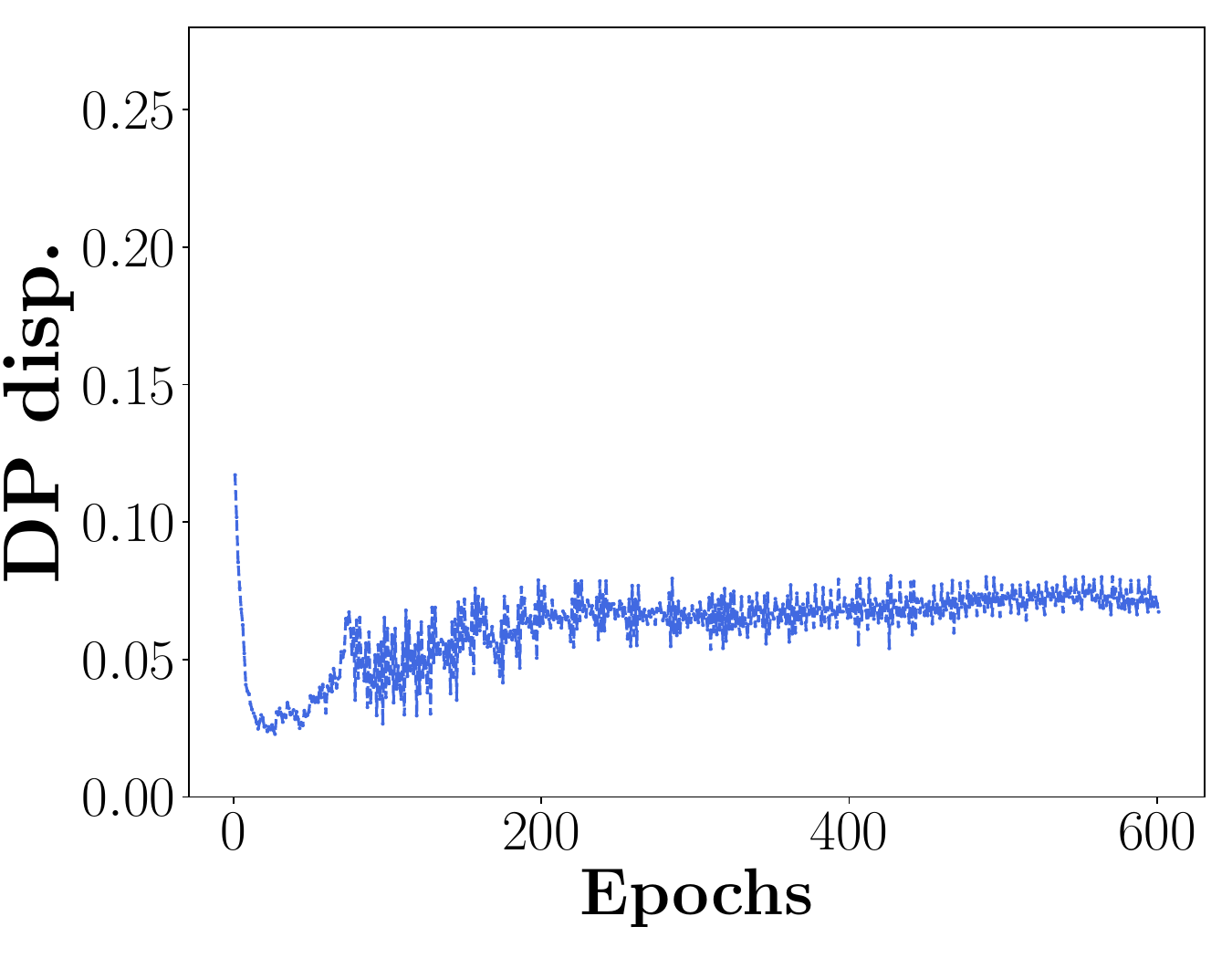}\vspace{-0.2cm}
\caption{FC}
\label{fig:dp_fc}
\end{subfigure}
\begin{subfigure}{0.23\textwidth}
\includegraphics[scale=0.15]{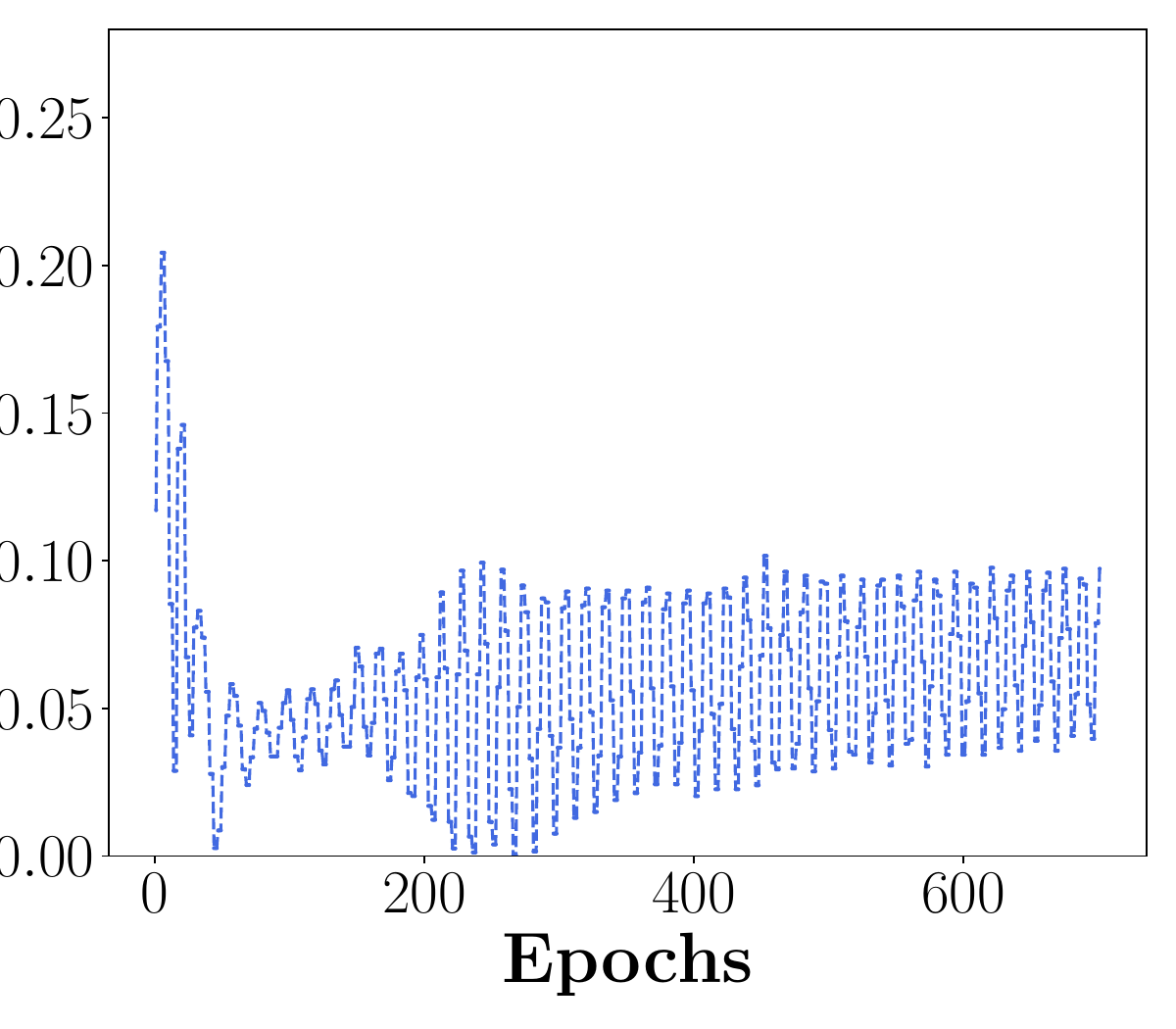}\vspace{-0.2cm}
\caption{AD}
\label{fig:dp_ad}
\end{subfigure}
\begin{subfigure}{0.23\textwidth}
\includegraphics[scale=0.15]{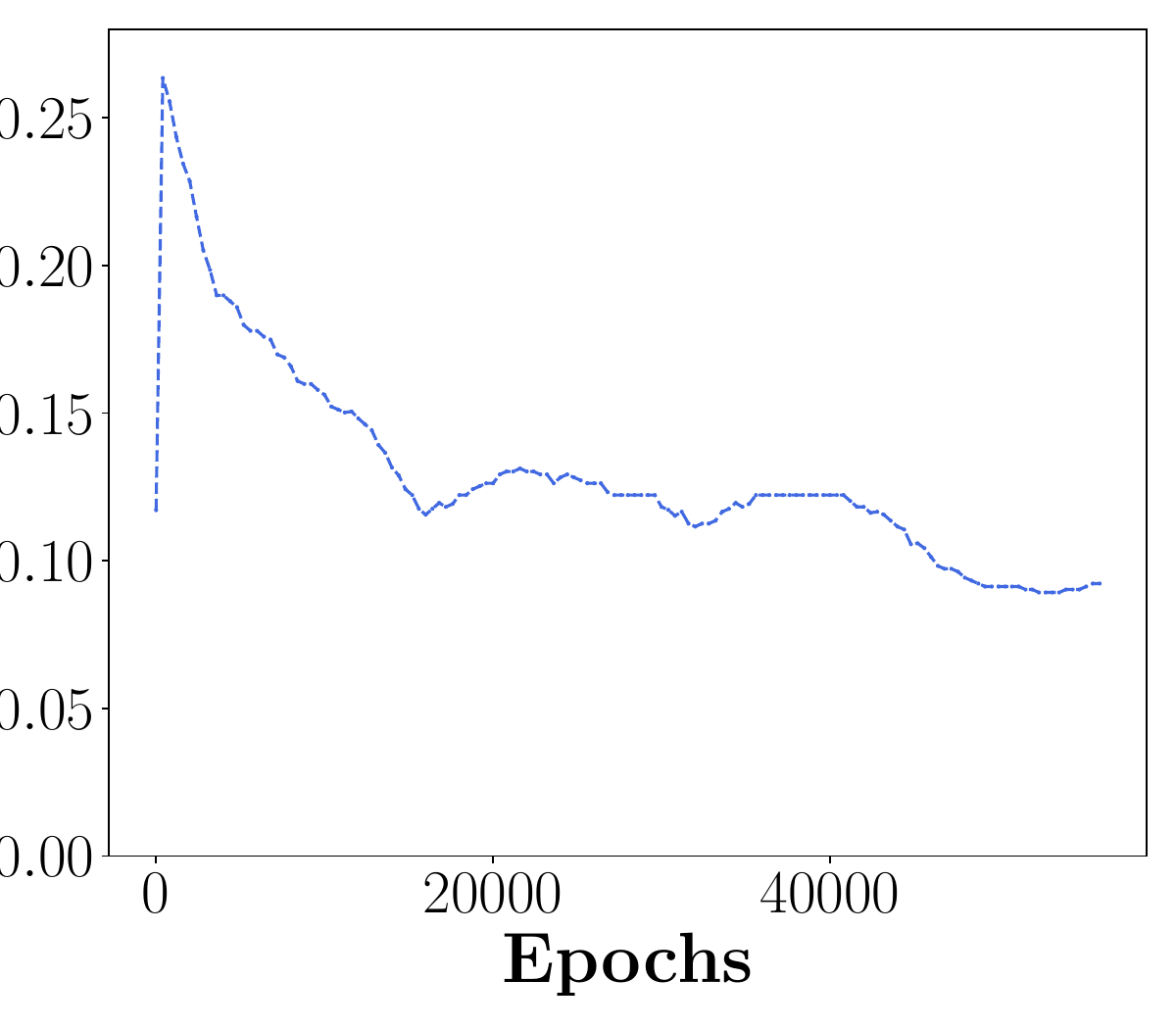}\vspace{-0.2cm}
\caption{AdaFair}
\label{fig:dp_adafair}
\end{subfigure}
\begin{subfigure}{0.23\textwidth}
\includegraphics[scale=0.15]{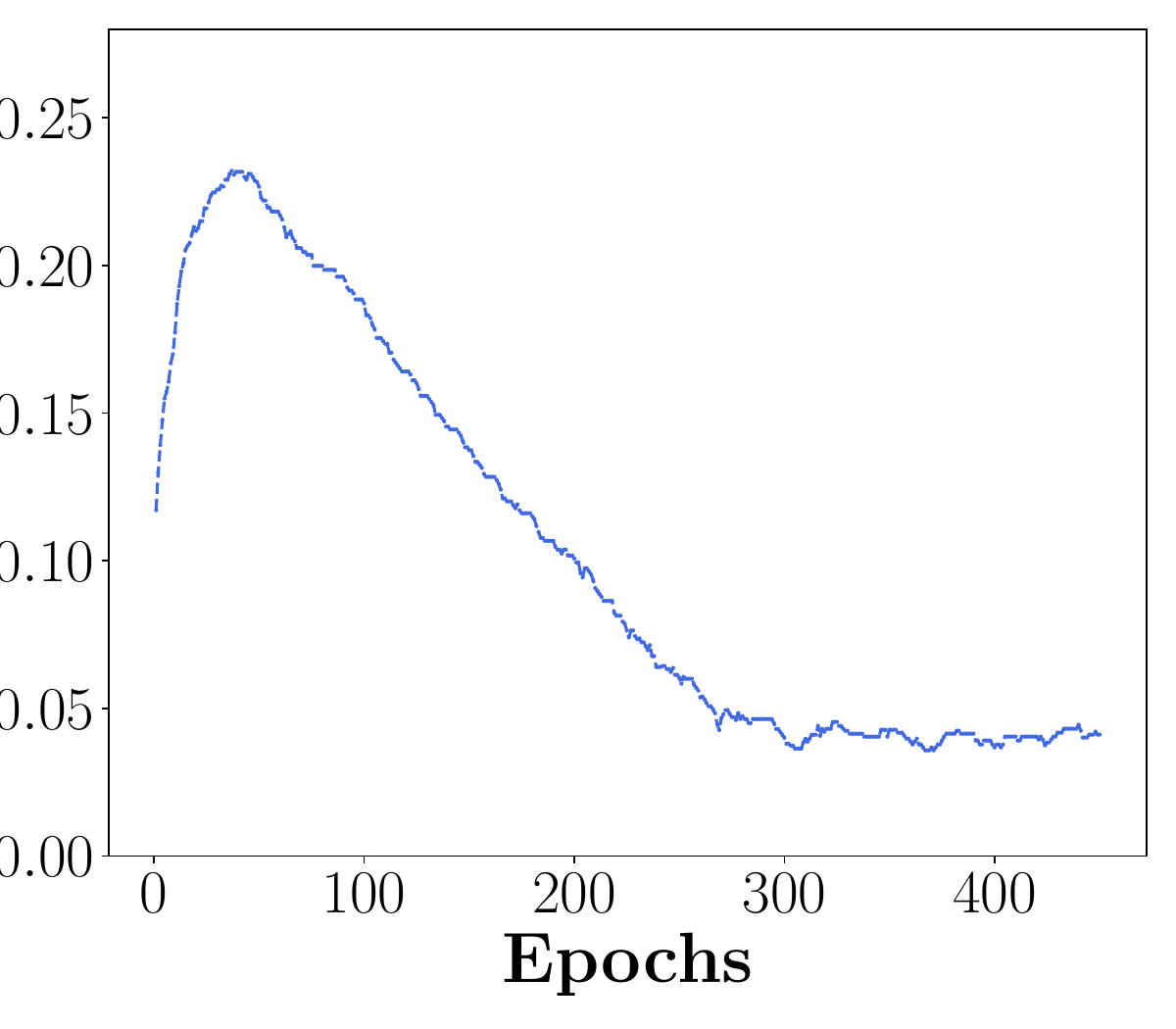}\vspace{-0.2cm}
\caption{\fb{}}
\label{fig:dp_fairbatch}
\end{subfigure}
\vspace{-0.2cm}
\caption{DP disparity curves of algorithms on the synthetic dataset.}
\label{fig:algorithm_curves_dp}
\end{figure*}

\begin{figure}[h]
\centering
\begin{subfigure}{0.47\columnwidth}
\includegraphics[width=\columnwidth]{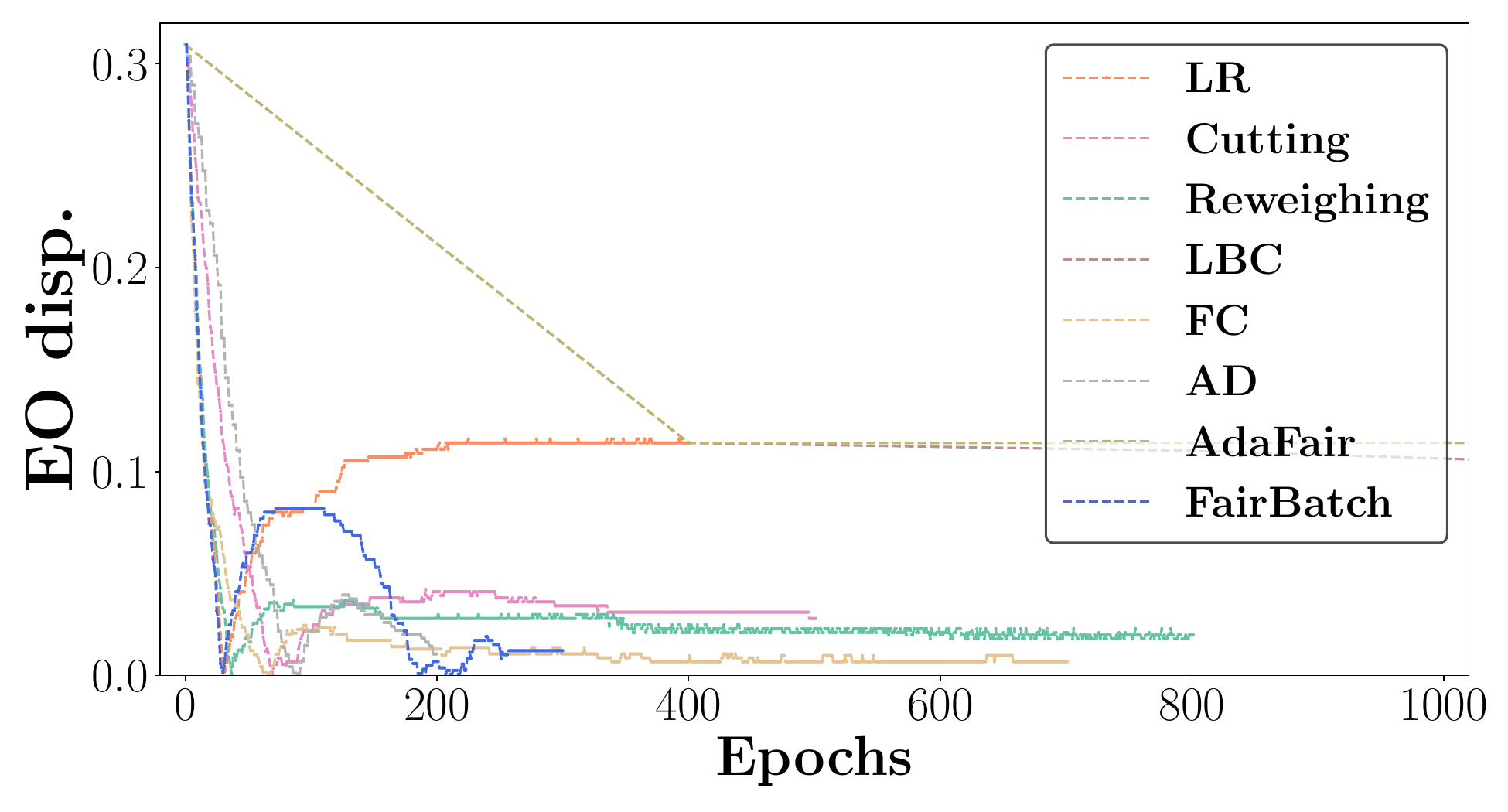}
\caption{EO disparity curve of FairBatch and the baselines.}
\end{subfigure}
\begin{subfigure}{0.47\columnwidth}
\includegraphics[width=\columnwidth]{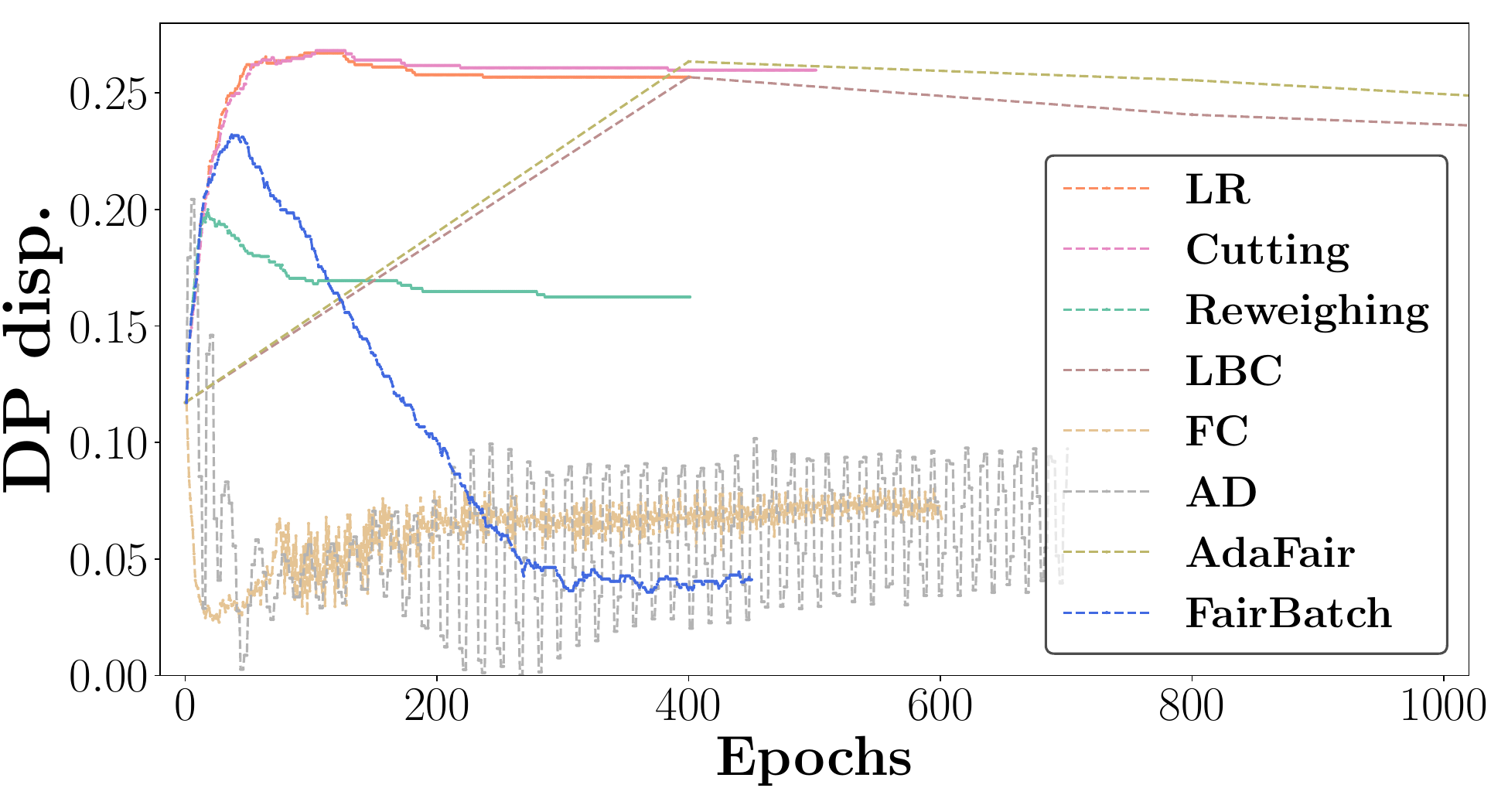}
\caption{DP disparity curve of FairBatch and the baselines.}
\end{subfigure}
\caption{Epochs-fairness disparity curves of all algorithms together.}
\label{fig:allalgorithms}
\end{figure}

\subsection{Trade-off Curves of FairBatch}
\label{appendix:tradeoffcurve}
We continue from Sec.~\ref{sec:performances} and show in Fig.~\ref{fig:tradeoff} the accuracy-fairness trade-off curves of \fb{} for EO and DP on the synthetic dataset.
\fb{} can be tuned by making it ``less sensitive'' to disparity.
In Algorithms~\ref{alg:fairbatch_eo} and~\ref{alg:fairbatch_dp}, notice that the $\la$ parameters are updated if there is any disparity among sensitive groups.
We modify this logic where the $\la$ parameters are only updated if the disparity is above some threshold $T$.
The trade-off curves in Fig.~\ref{fig:tradeoff} are thus generated by adjusting $T$.
For both EO and DP, we observe that there is a clear trade-off between accuracy and disparity.

\begin{figure}[h]
\centering
\begin{subfigure}{0.47\columnwidth}
\includegraphics[width=\columnwidth]{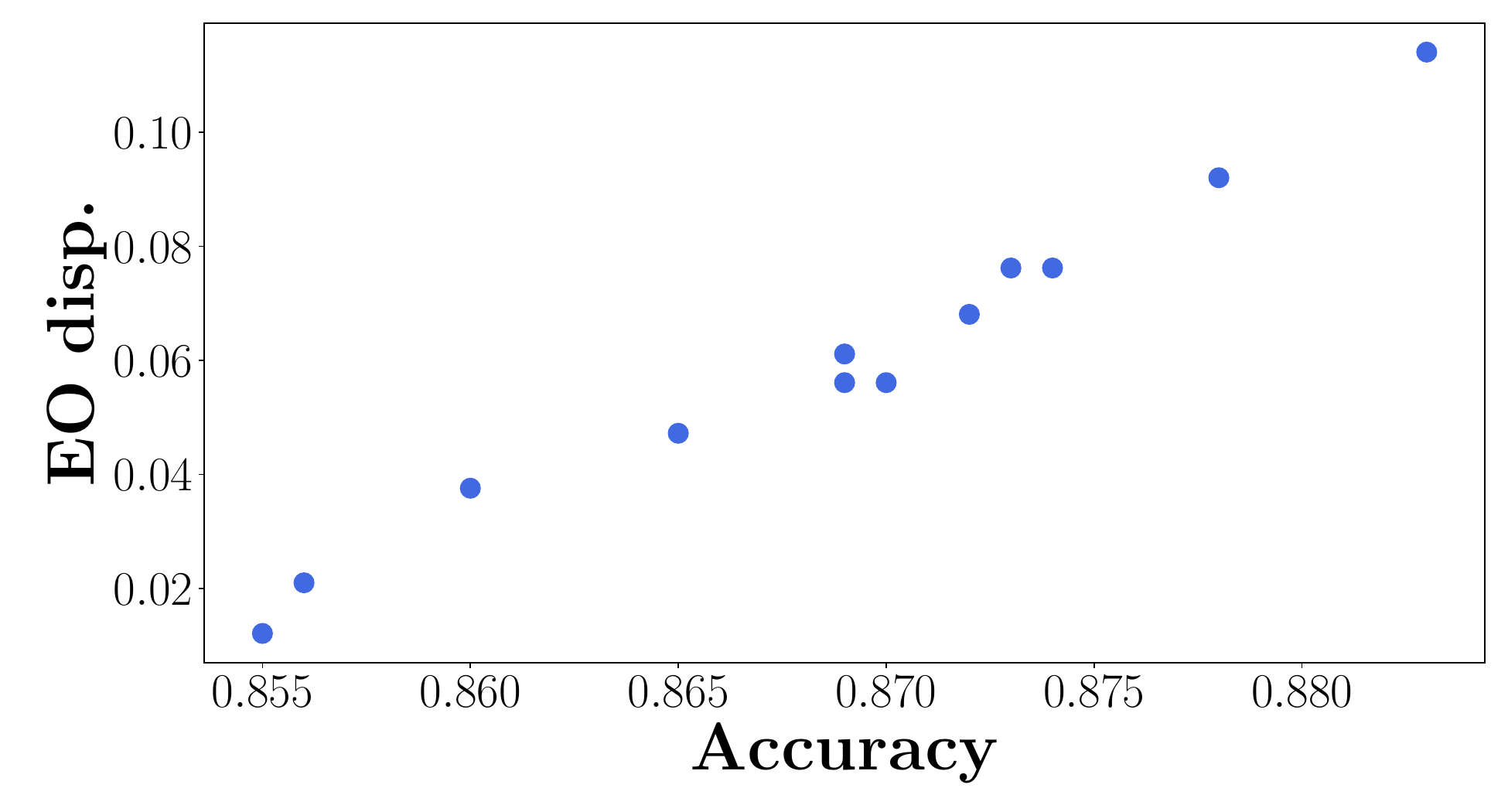}
\caption{Accuracy-EO disparity trade-off curve.}
\end{subfigure}
\begin{subfigure}{0.47\columnwidth}
\includegraphics[width=\columnwidth]{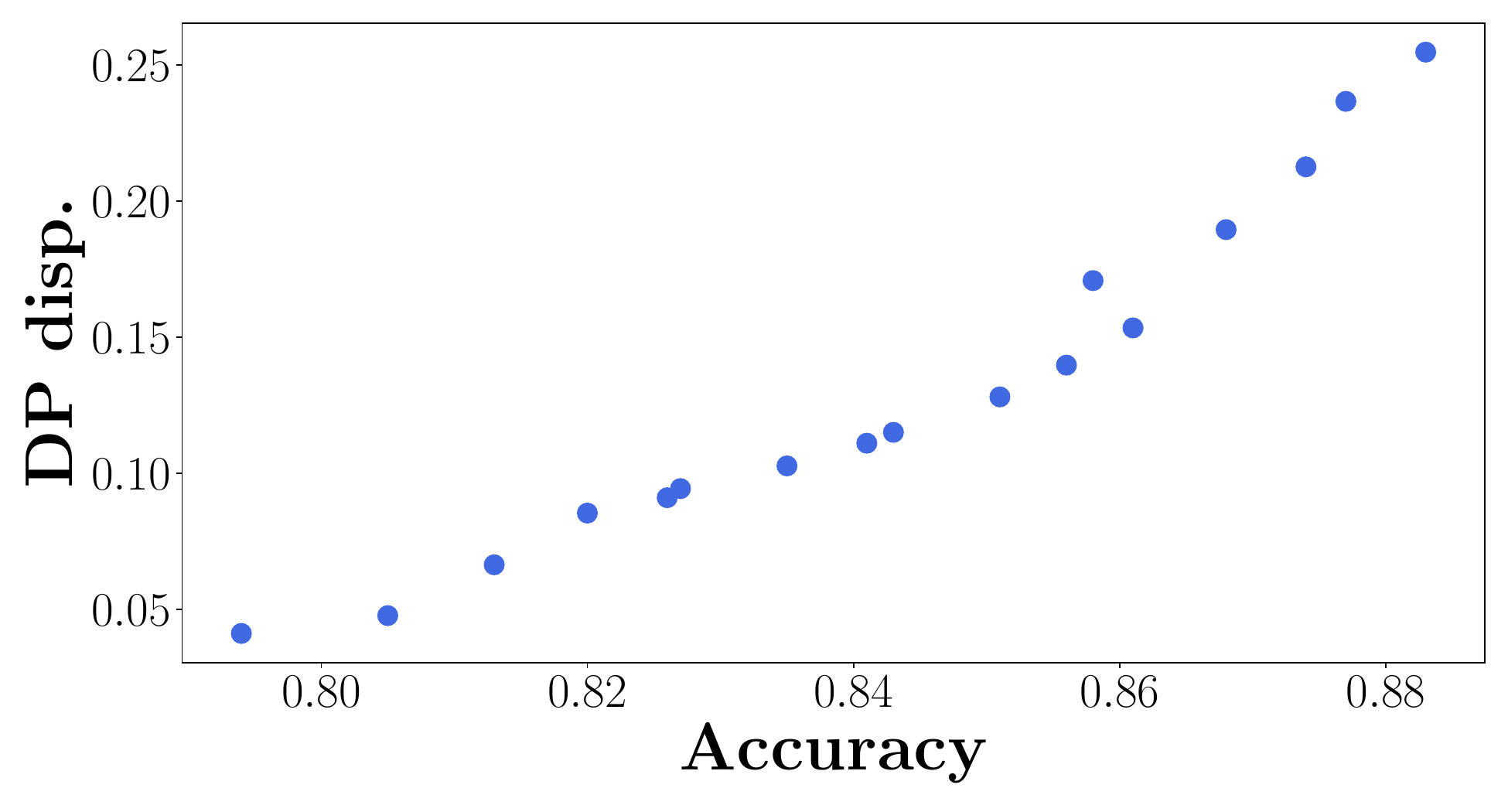}
\caption{Accuracy-DP disparity trade-off curve.}
\end{subfigure}
\caption{Accuracy-fairness disparity trade-off curves of \fb{} on the synthetic dataset.}
\label{fig:tradeoff}
\end{figure}

\subsection{Wall Clock Times}
\label{appendix:wallclocktimes}

We continue from Sec.~\ref{sec:performances} and show in Table~\ref{tbl:wallclocktime} the wall clock times (in seconds) of the experiments in Table~\ref{tbl:synthetic_real_eqopp} where we compare \fb{} against all the fairness techniques on the synthetic, COMPAS, and AdultCensus datasets.
As a result, each runtime is proportional to the number of epochs shown in Table~\ref{tbl:synthetic_real_eqopp}.
When comparing the runtimes of individual batches, \fb{}'s batch takes 1.5x longer to run than LR's batch.

\begin{table}[h]
  \caption{Wall clock times (in seconds) of the experiments in Table~\ref{tbl:synthetic_real_eqopp} using the same settings.}
  \label{tbl:wallclocktime}
  \vspace{0.2cm}
  \centering
\scalebox{0.95}{
  \begin{tabular}{l@{\hspace{10pt}}c@{\hspace{10pt}}c@{\hspace{10pt}}c@{\hspace{10pt}}c@{\hspace{10pt}}c@{\hspace{10pt}}c@{\hspace{10pt}}c@{\hspace{10pt}}c}
    \toprule
     Dataset & LR & Cutting & RW & LBC & FC & AD & AdaFair & FairBatch \\
    \midrule
    \multirow{1}{*}{Synthetic}  & 5.71 & 5.67 & 17.24 & 208.47 & 16.05 & 3.97 & 294.31 & 5.25\\
    \cmidrule(l){1-9}
    \multirow{1}{*}{COMPAS}  & 6.07 & 3.34 & 7.48 & 94.10 & 2.76 & 6.93 & 215.39 & 3.00\\
    \cmidrule(l){1-9}
    \multirow{1}{*}{AdultCensus}  & 22.96 & 7.70 & 10.02 & 558.31 & 28.71 & 31.76 & 791.58 & 46.79 \\
    \bottomrule
  \end{tabular}
  }
\end{table}

\subsection{Comparison with AdaFair}
\label{appendix:adafair}

We continue from Sec.~\ref{sec:performances} and compare the class weights between \fb{} and the AdaFair algorithm. For AdaFair, the class weights are calculated by adding all example weights in each class. Fig.~\ref{fig:adafair_fairbatch} shows the weight changes of each algorithm.
Overall, the trends of the weights are similar.
Again, the advantage of \fb{} is that it can run within one model training instead of using multiple model trainings as in AdaFair.

\begin{figure}[h]
\centering
\begin{subfigure}{0.47\columnwidth}
\includegraphics[width=\columnwidth]{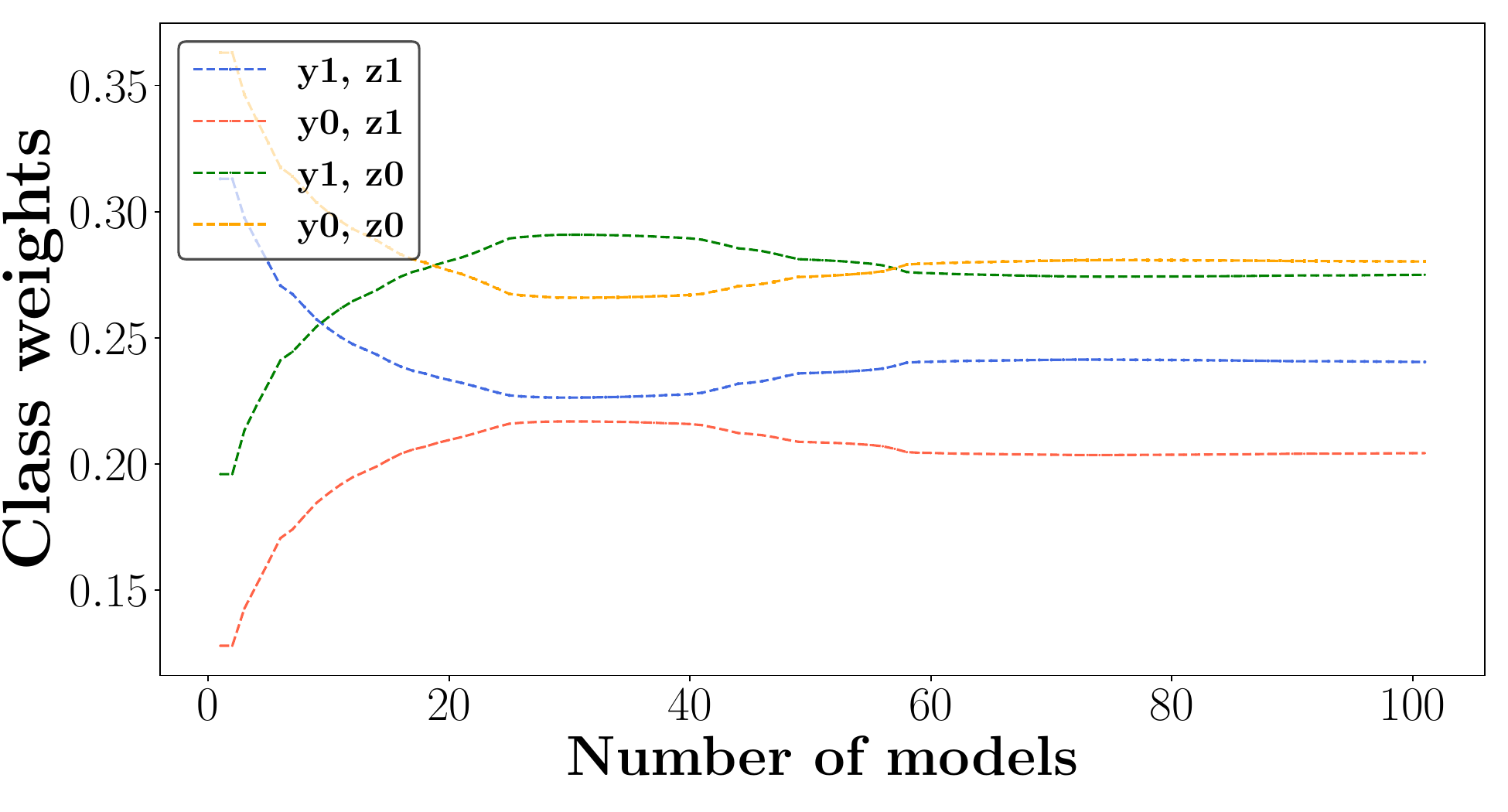}
\caption{Number of models-class weight curve of AdaFair.}
\end{subfigure}
\begin{subfigure}{0.47\columnwidth}
\includegraphics[width=\columnwidth]{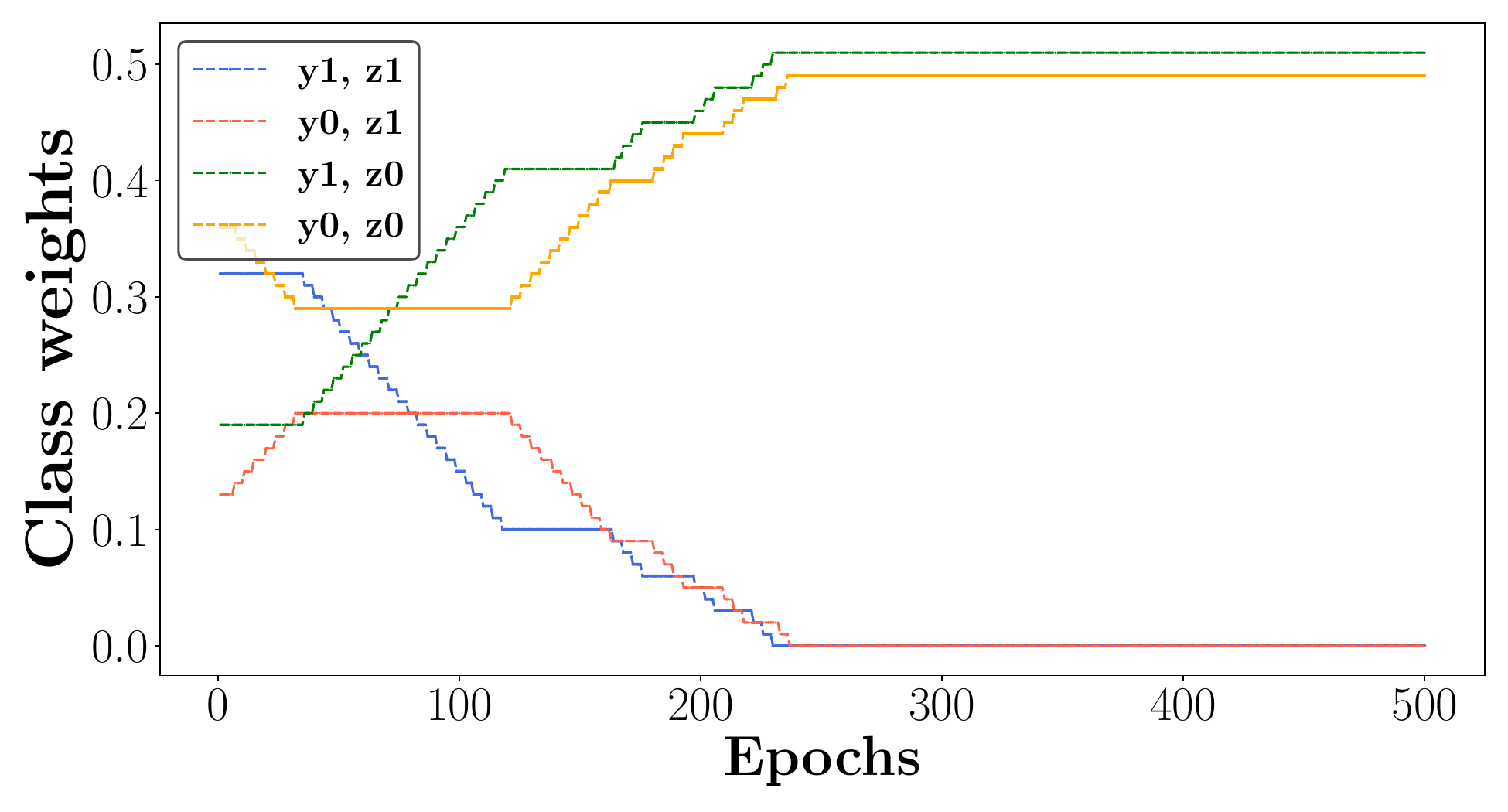}
\caption{Epochs-class weight curve of \fb{}.}
\end{subfigure}
\caption{Comparison of the weight changes on AdaFair and \fb{} w.r.t.\@ equalized odds on the synthetic dataset.}
\label{fig:adafair_fairbatch}
\end{figure}

\subsection{Extension of \fb{} to Multi Classification}
\label{appendix:multiclassification}

We continue from Sec.~\ref{sec:finetuning} and explain how \fb{} can be extended to support multi classification by adjusting more $\lambda$ parameters.
For example, for ED, the label attribute has $n$ classes, and each class connects to $m$ $\lambda$s.
We adjust $m$ $\lambda$s in the class $\ry=i$ at each epoch, where the class $\ry=i$ has the highest ED disparity at that epoch.

\subsection{FairBatch with Importance Sampling}
\label{appendix:importancesampling}

We continue from Sec.~\ref{sec:speedup} and show Fig.~\ref{fig:lossweighting}, which plots the convergence of \fb{} when merged with loss-based weighting batch selection.
As a result, \fb{} uses about 50 fewer epochs to converge to low disparities compared to not using loss-based weighting.

\begin{figure}[h]
\centering
\begin{subfigure}{0.47\columnwidth}
\includegraphics[width=\columnwidth]{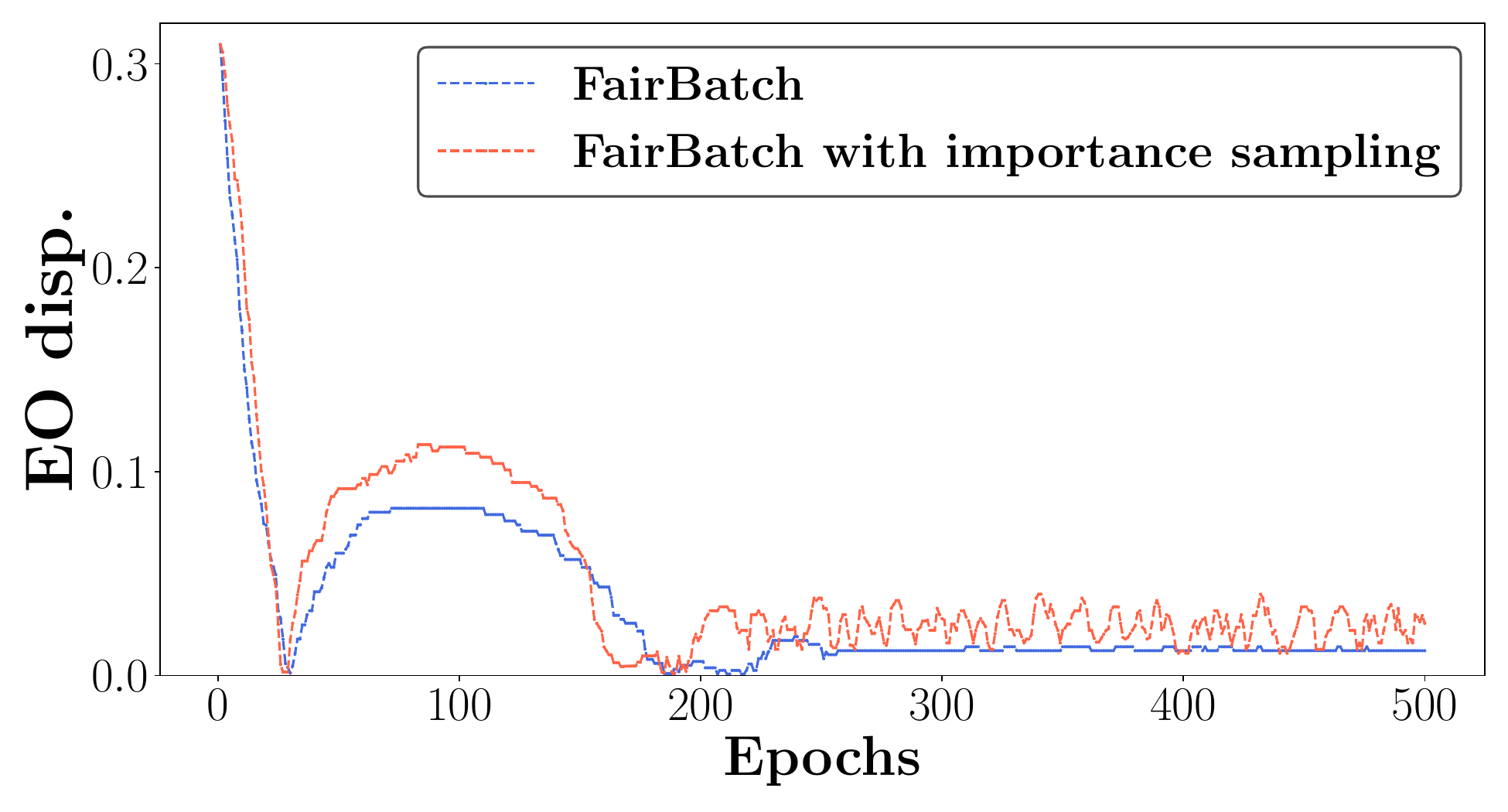}
\caption{EO disparity curve of \fb{}.}
\end{subfigure}
\begin{subfigure}{0.47\columnwidth}
\includegraphics[width=\columnwidth]{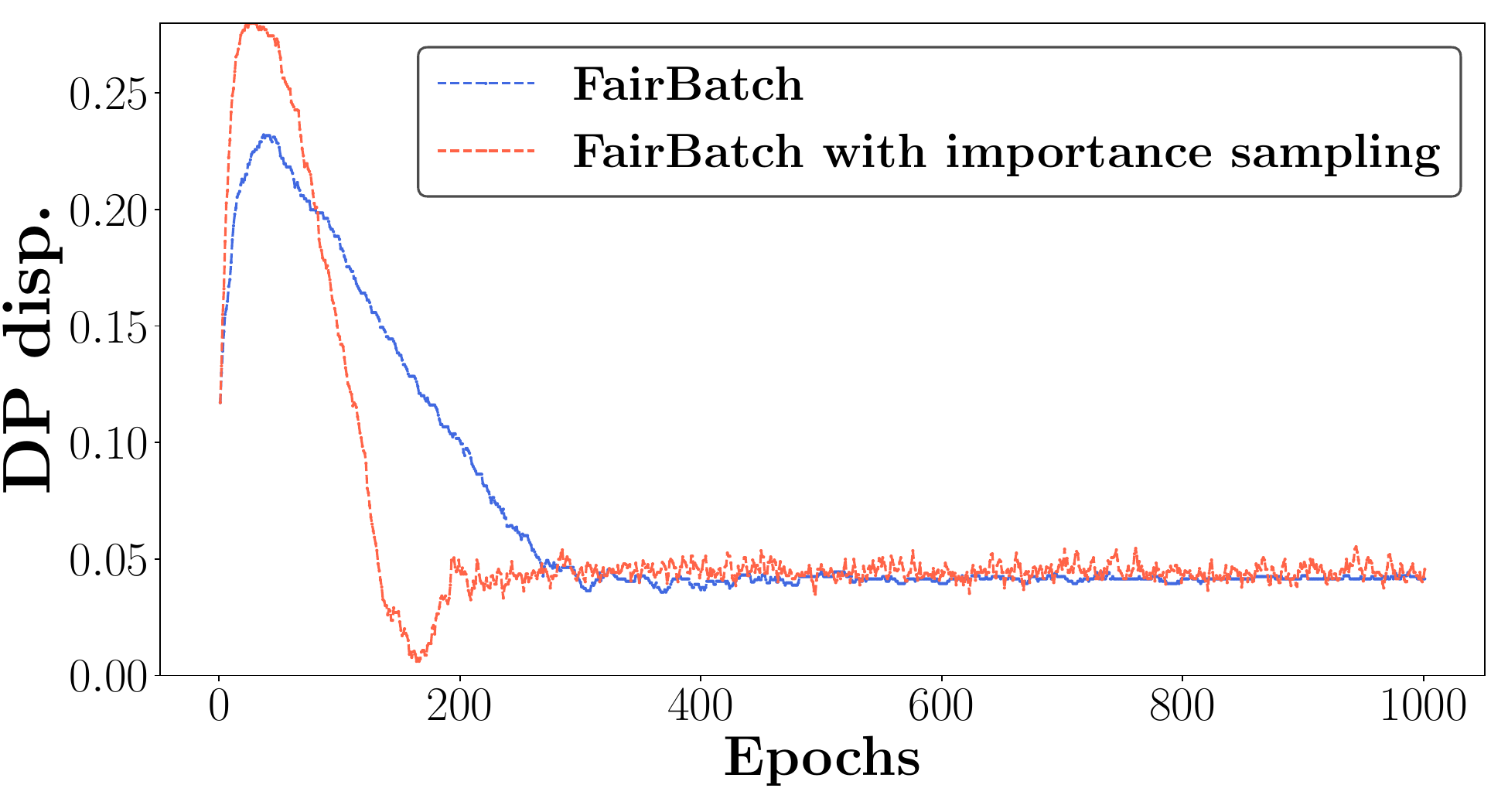}
\caption{DP disparity curve of \fb{}.}
\end{subfigure}
\caption{Fairness curves of \fb{} on the synthetic dataset, with/without loss-based weighting~\citep{loshchilov-ICLR16batchsel}.}
\label{fig:lossweighting}
\end{figure}

\end{document}